\DeclareMathOperator*{\argmax}{arg\,max}
\newtheorem{defn}{Definition}[]
\newtheorem{lemma}{Lemma}[]
\newtheorem{prop}{Property}[]
\icmltitlerunning{Evaluating the Performance of RL Algorithms}
\begin{document}

\twocolumn[
\icmltitle{Evaluating the Performance of Reinforcement Learning Algorithms}

% It is OKAY to include author information, even for blind
% submissions: the style file will automatically remove it for you
% unless you've provided the [accepted] option to the icml2020
% package.

% List of affiliations: The first argument should be a (short)
% identifier you will use later to specify author affiliations
% Academic affiliations should list Department, University, City, Region, Country
% Industry affiliations should list Company, City, Region, Country

% You can specify symbols, otherwise they are numbered in order.
% Ideally, you should not use this facility. Affiliations will be numbered
% in order of appearance and this is the preferred way.
\icmlsetsymbol{equal}{*}

\begin{icmlauthorlist}
\icmlauthor{Scott M.~Jordan}{umass}
\icmlauthor{Yash Chandak}{umass}
\icmlauthor{Daniel Cohen}{umass}
\icmlauthor{Mengxue Zhang}{umass}
\icmlauthor{Philip S.~Thomas}{umass}
\end{icmlauthorlist}

\icmlaffiliation{umass}{College of Information and Computer Sciences, University of Massachusetts, MA, USA}
\icmlcorrespondingauthor{Scott Jordan}{sjordan@cs.umass.edu}

% You may provide any keywords that you
% find helpful for describing your paper; these are used to populate
% the "keywords" metadata in the PDF but will not be shown in the document
\icmlkeywords{Reinforcement Learning, Performance Evaluation}

\vskip 0.3in
]
% this must go after the closing bracket ] following \twocolumn[ ...

% This command actually creates the footnote in the first column
% listing the affiliations and the copyright notice.
% The command takes one argument, which is text to display at the start of the footnote.
% The \icmlEqualContribution command is standard text for equal contribution.
% Remove it (just {}) if you do not need this facility.

\printAffiliationsAndNotice{}  % leave blank if no need to mention equal contribution
% \printAffiliationsAndNotice{\icmlEqualContribution} % otherwise use the standard text.

\begin{abstract}
Performance evaluations are critical for quantifying algorithmic advances in reinforcement learning. Recent reproducibility analyses have shown that reported performance results are often inconsistent and difficult to replicate. In this work, we argue that the inconsistency of performance stems from the use of flawed evaluation metrics. Taking a step towards ensuring that reported results are consistent, we propose a new comprehensive evaluation methodology for reinforcement learning algorithms that produces reliable measurements of performance both on a single environment and when aggregated across environments. We demonstrate this method by evaluating a broad class of reinforcement learning algorithms on standard benchmark tasks.
\end{abstract}

\section{Introduction}
\label{sec:intro}

When applying reinforcement learning (RL), particularly to real-world applications, it is desirable to have algorithms that reliably achieve high levels of performance without requiring expert knowledge or significant human intervention. 
For researchers, having algorithms of this type would mean spending less time tuning algorithms to solve benchmark tasks and more time developing solutions to harder problems. 
Current evaluation practices do not properly account for the uncertainty in the results \citep{henderson2018deep} and neglect the difficulty of applying RL algorithms to a given problem. 
Consequently, existing RL algorithms are difficult to apply to real-world applications \citep{arnold2019realworld}. 
To both make and track progress towards developing reliable and easy-to-use algorithms, we propose a principled evaluation procedure that quantifies the difficulty of using an algorithm.

For an evaluation procedure to be useful for measuring the usability of RL algorithms, we suggest that it should have four properties. 
First, to ensure accuracy and reliability, an evaluation procedure should be \textit{scientific}, such that it provides information to answer a research question, tests a specific hypothesis, and quantifies any uncertainty in the results. 
Second, the performance metric captures the \textit{usability} of the algorithm over a wide variety of environments. For a performance metric to capture the usability of an algorithm, it should include the time and effort spent tuning the algorithm's hyperparameters (e.g., step-size and policy structure). 
Third, the evaluation procedure should be \textit{nonexploitative} \citep{balduzzi2018eval}, 
meaning no algorithm should be favored by performing well on an over-represented subset of environments or by abusing a particular score normalization method. 
Fourth, an evaluation procedure should be \textit{computationally tractable}, meaning that a typical researcher should be able to run the procedure and repeat experiments found in the literature.

As an evaluation procedure requires a question to answer, we pose the following to use throughout the paper: 
\textit{which algorithm(s) perform well across a wide variety of environments with little or no environment-specific tuning}? Throughout this work, we refer to this question as the \textit{general evaluation question}. 
This question is different from the one commonly asked in articles proposing a new algorithm, e.g., the \textit{common question} is, can algorithm X outperform other algorithms on tasks A, B, and C? 
In contrast to the common question, the expected outcome for the general evaluation question is not to find methods that maximize performance with optimal hyperparameters but to identify algorithms that do not require extensive hyperparameter tuning and thus are easy to apply to new problems.

In this paper, we contend that the standard evaluation approaches do not satisfy the above properties, and are not able to answer the general evaluation question. 
Thus, we develop a new procedure for evaluating RL algorithms that overcomes these limitations and can accurately quantify the uncertainty of performance. 
The main ideas in our approach are as follows. 
We present an alternative view of an algorithm such that sampling its performance can be used to answer the general evaluation question. 
We define a new normalized performance measure, \textit{performance percentiles}, which uses a relative measure of performance to compare algorithms across environments. 
We show how to use a game-theoretic approach to construct an aggregate measure of performance that permits quantifying uncertainty. 
Lastly, we develop a technique, \textit{performance bound propagation} (PBP), to quantify and account for uncertainty throughout the entire evaluation procedure. 
We provide source code so others may easily apply the methods we develop here.\footnote{Source code for this paper can be found at \url{https://github.com/ScottJordan/EvaluationOfRLAlgs}.} 

\section{Notation and Preliminaries}

In this section, we give notation used in this paper along with an overview of an evaluation procedure. 
In addition to this section, a list of symbols used in this paper is presented in Appendix \ref{app:confagg}.
We represent a performance metric of an algorithm, $i \in \mathcal{A}$, on an environment, $j \in \mathcal{M}$, as a random variable $X_{i,j}$. This representation captures the variability of results due to the choice of the random seed controlling the stochastic processes in the algorithm and the environment. 
The choice of the metric depends on the property being studied and is up to the experiment designer. The performance metric used in this paper is the average of the observed returns from one execution of the entire training procedure, which we refer to as the average return. 
The cumulative distribution function (CDF), $F_{X_{i,j}} \colon \mathbb{R} \to [0,1]$, describes the performance distribution of algorithm $i$ on environment $j$ such that $F_{X_{i,j}}(x) \coloneqq \Pr(X_{i,j} \le x)$. 
The quantile function, $Q_{X_{i,j}}(\alpha) \coloneqq \inf_{x} \{ x \in \mathbb{R} | F_{X_{i,j}}(x) \ge \alpha \}$, maps a cumulative probability, $\alpha \in (0,1)$, to a score such that $\alpha$ proportion of samples of $X_{i,j}$ are less than or equal to $Q_{X_{i,j}}(\alpha)$.
A normalization function, $g \colon \mathbb{R} \times \mathcal{M} \to \mathbb{R}$, maps a score, $x$, an algorithm receives on an environment, $j$, to a normalized score, $g(x,j)$, which has a common scale for all environments. 
In this work, we seek an aggregate performance measure, $y_i \in \mathbb{R}$, for an algorithm, $i$, such that $y_i \coloneqq \sum_{j=1}^{|\mathcal{M}|} q_j \mathbf{E}[g(X_{i,j},j)]$, where $q_j \ge 0$ for all $j \in \{1,\dotsc,|\mathcal{M}|\}$ and $\sum_{j=1}^{|\mathcal{M}|}q_j = 1$. In Section \ref{sec:aggregation}, we discuss choices for the normalizing function $g$ and weightings $q$ that satisfy the properties specified in the introduction.

The primary quantities of interest in this paper are the aggregate performance measures for each algorithm and confidence intervals on that measure. Let $y \in \mathbb{R}^{|\mathcal{A}|}$ be a vector representing the aggregate performance for each algorithm. We desire confidence intervals, $Y^-,Y^+ \in \mathbb{R}^{|\mathcal{A}|} \times \mathbb{R}^{|\mathcal{A}|}$, such that, for a confidence level $\delta \in (0,0.5]$, 
\begin{equation}
    \Pr \left (\forall i \in \{1,2,\dotsc,|\mathcal A|\}, y_i \in [Y^-_i,Y^+_i] \right) \ge 1 - \delta.
\end{equation}

To compute an aggregate performance measure and its confidence intervals that meet the criteria laid out in the introduction, one must consider the entire evaluation procedure. We view an evaluation procedure to have three main components: data collection, data aggregation, and reporting of the results. During the data collection phase, samples are collected of the performance metric $X_{i,j}$ for each combination $(i,j)$ of an algorithm $i \in \mathcal{A}$ and environment $j \in \mathcal{M}$. 
In the data aggregation phase, all samples of performance are normalized so the metric on each environment is on a similar scale, then they are aggregated to provide a summary of each algorithm's performance across all environments. Lastly, the uncertainty of the results is quantified and reported.

\section{Data Collection}
\label{sec:datacollection}

In this section, we discuss how common data collection methods are unable to answer the general evaluation question and then present a new method that can. 
We first highlight the core difference in our approach to previous methods.

The main difference between data collection methods is in how the samples of performance are collected for each algorithm on each environment.
Standard approaches rely on first tuning an algorithm's hyperparameters, i.e., any input to an algorithm that is not the environment, and then generating samples of performance. 
Our method instead relies on having a definition of an algorithm that can automatically select, sample, or adapt hyperparameters. This method can be used to answer the general evaluation question because its performance measure represents the knowledge required to use the algorithm. We discuss these approaches below.

\subsection{Current Approaches}
\label{sec:datacollectioncurrent}

\begin{figure}
    \centering
    \includegraphics[width=0.4\textwidth]{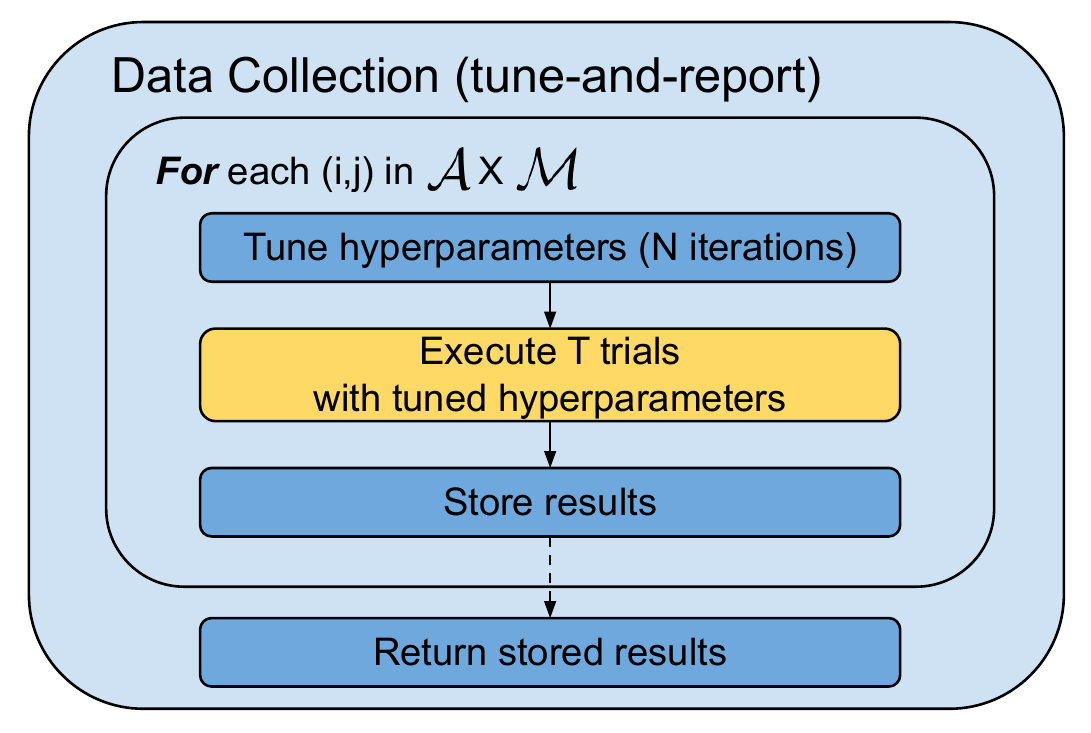}
    \caption{Data collection process of the tune-and-report method. The yellow box indicates trials using different random seeds.}
    \label{fig:persampling}
\end{figure}

A typical evaluation procedure used in RL research is the \textit{tune-and-report} method. As depicted in Figure \ref{fig:persampling}, the tune-and-report method has two phases: a tuning phase and a testing phase. In the tuning phase, hyperparameters are optimized either manually or via a hyperparameter optimization algorithm. Then after tuning, only the best hyperparameters are selected and executed for $T$ trials using different random seeds to provide an estimate of performance.

The tune-and-report data collection method does not satisfy the usability requirement or the scientific requirement. 
Recall that our objective is to capture the difficulty of using a particular algorithm. 
Because the tune-and-report method ignores the amount of data used to tune the hyperparameter, an algorithm that only works well after significant tuning could be favored over one that works well without environment-specific tuning, thus, violating the requirements.

Consider an extreme example of an RL algorithm that includes all policy parameters as hyperparameters. This algorithm would then likely be optimal after any iteration of hyperparameter tuning that finds the optimal policy. 
This effect is more subtle in standard algorithms, where hyperparameter tuning infers problem-specific information about how to search for the optimal policy, (e.g., how much exploration is needed, or how aggressive policy updates can be).  
Furthermore, this demotivates the creation of algorithms that are easier to use but do not improve performance after finding optimal hyperparameters.

The tune-and-report method violates the scientific property by not accurately capturing the uncertainty of performance. Multiple i.i.d.~samples of performance are taken after hyperparameter tuning and used to compute a bound on the mean performance. However, these samples of performance do not account for the randomness due to hyperparameter tuning. As a result, any statistical claim would be inconsistent with repeated evaluations of this method. 
This has been observed in several studies where further hyperparameter tuning has shown no difference in performance relative to baseline methods \citep{lucic2018gans,melis2018eval}.

The evaluation procedure proposed by \citet{dabney2014adaptive} addresses issues with uncertainty due to hyperparameter tuning and performance not capturing the usability of algorithms. Dabney's evaluation procedure computes performance as a weighted average over all $N$ iterations of hyperparameter tuning, and the entire tuning process repeats for $T$ trials. Even though this evaluation procedure fixes the problems with the tune-and-report approach, it violates our  computationally tractable property by requiring $TN$ executions of the algorithm to produce just $T$ samples of performance. 
In the case where $N=1$ it is not clear how hyperparameters should be set.
Furthermore, this style of evaluation does not cover the case where it is prohibitive to perform hyperparameter tuning, e.g., slow simulations, long agent lifetimes, lack of a simulator, and situations where it is dangerous or costly to deploy a bad policy. 
In these situations, it is desirable for algorithms to be insensitive to the choice of hyperparameters or able to adapt them during a single execution. It is in this setting that the general evaluation question can be answered.

\begin{figure}
    \centering
    \includegraphics[width=0.4\textwidth]{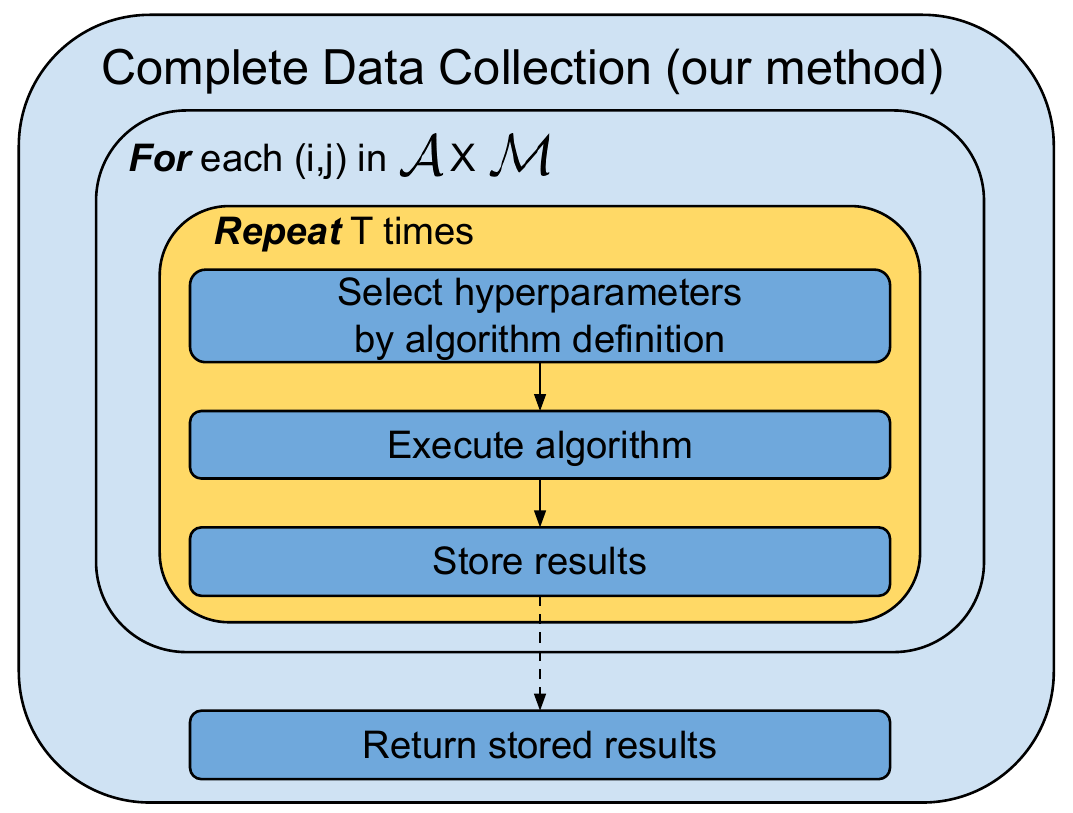}
    \caption{Data collection process using complete algorithm definitions. The yellow box indicates using different random seeds.}
    \label{fig:completedata}
\end{figure}

\subsection{Our Approach}
\label{sec:datacollectionnew}

In this section, we outline our method, \textit{complete data collection}, that does not rely on hyperparameter tuning.
If there were no hyperparameters to tune, evaluating algorithms would be simpler. Unfortunately, how to automatically set hyperparameters has been an understudied area. Thus, we introduce the notion of a complete algorithm definition. 

\begin{defn}[Algorithm Completeness]
An algorithm is complete on an environment $j$, when defined such that the only required input to the algorithm is meta-information about environment $j$, e.g., the number of state features and actions.
\end{defn}

Algorithms with a complete definition can be used on an environment and without specifying any hyperparameters. 
Note that this does not say that an algorithm cannot receive forms of problem specific knowledge, only that it is not required. 
A well-defined algorithm will be able to infer effective combinations of hyperparameters or adapt them during learning. 
There are many ways to make an existing algorithm complete. In this work, algorithms are made complete by defining a distribution from which to randomly sample hyperparameters.
Random sampling may produce poor or divergent behavior in the algorithm, but this only indicates that it is not yet known how to set the hyperparameters of the algorithm automatically. Thus, when faced with a new problem, finding decent hyperparameters will be challenging.
One way to make an adaptive complete algorithm is to include a hyperparameter optimization method in the algorithm. However, all tuning must be done within the same fixed amount of time and cannot propagate information over trials used to obtain statistical significance.

Figure \ref{fig:completedata} shows the complete data collection method. For this method we limit the scope of algorithms to only include ones with complete definitions; thus, it does not violate any of the properties specified. This method satisfies the scientific requirement since it is designed to answer the general evaluation question, and the uncertainty of performance can be estimated using all of the trials. Again, this data collection method captures the difficulty of using an algorithm since the complete definition encodes the knowledge necessary for the algorithm to work effectively. 
The compute time of this method is tractable, since $T$ executions of the algorithm produces $T$ independent samples of performance.

The practical effects of using the complete data collection method are as follows. 
Researchers do not have to spend time tuning each algorithm to try and maximize performance. 
Fewer algorithm executions are required to obtain a statistically meaningful result. 
With this data collection method, improving upon algorithm definitions will become significant research contributions and lead to algorithms that are easy to apply to many problems.

\section{Data Aggregation}
\label{sec:aggregation}

Answering the general evaluation question requires a ranking of algorithms according to their performance on all environments $\mathcal{M}$. The aggregation step accomplishes this task by combining the performance data generated in the collection phase and summarizing it across all environments. 
However, data aggregation introduces several challenges.
First, each environment has a different range of scores that need to be normalized to a common scale. 
Second, a uniform weighting of environments can introduce bias.
For example, the set of environments might include many slight variants of one domain, giving that domain a larger weight than a single environment coming from a different domain.

\subsection{Normalization}
\label{sec:normalization}
The goal in score normalization is to project scores from each environment onto the same scale while not being exploitable by the environment weighting.
In this section, we first show how existing normalization techniques are exploitable or do not capture the properties of interest. Then we present our normalization technique: performance percentiles.

\subsubsection{Current Approaches}

We examine two normalization techniques: performance ratios and policy percentiles. We discuss other normalization methods in Appendix \ref{app:normalization}.
The performance ratio is commonly used with the Arcade Learning Environment to compare the performance of algorithms relative to human performance \cite{mnih2015dqn,machado2018ale}.
The performance ratio of two algorithms $i$ and $k$ on an environment $j$ is $\mathbf{E}[X_{i,j}]/\mathbf{E}[X_{k,j}]$. 
This ratio is sensitive to the location and scale of the performance metric on each environment, such that an environment with scores in the range $[0,1]$ will produce larger differences than those on the range $[1000, 1001]$. Furthermore, all changes in performance are assumed to be equally challenging, i.e., going from a score of $0.8$ to $0.89$ is the same difficulty as $0.9$ to $0.99$. This assumption of linearity of difficulty is not reflected on environments with nonlinear changes in the score as an agent improves, e.g., completing levels in Super Mario.

A critical flaw in the performance ratio is that it can produce an arbitrary ordering of algorithms when combined with the arithmetic mean, $\sum_j q_j \mathbf{E}[X_{i,j}] / \mathbf{E}[X_{k,j}]$ \citep{fleming1986geom},
meaning a different algorithm in the denominator could change the relative rankings. 
Using the geometric mean can address this weakness of performance ratios, but does not resolve the other issues.

Another normalization technique is \textit{policy percentiles}, a method that projects the score of an algorithm through the performance CDF of random policy search \citep{dabney2014adaptive}. 
The normalized score for an algorithm, $i$, is $F_{X_{\Pi,j}}(X_{i,j})$, where $F_{X_{\Pi,j}}$ is the performance CDF when a policy is sampled uniformly from a set of policies, $\Pi$, on an environment $j$, i.e, $\pi \sim U(\Pi)$. 
Policy percentiles have a unique advantage in that performance is scaled according to how difficult it is to achieve that level of performance relative to random policy search. 
Unfortunately, policy percentiles rely on specifying $\Pi$, which often has a large search space. As a result, most policies will perform poorly, making all scores approach $1.0$. It is also infeasible to use when random policy search is unlikely to achieve high levels of performance. 
Despite these drawbacks, the scaling of scores according to a notion of difficulty is desirable, so we adapt this idea to use any algorithm's performance as a reference distribution.

\subsubsection{Our Approach}
\label{sec:normalizationours}

An algorithm's performance distribution can have an interesting shape with large changes in performance that are due to divergence, lucky runs, or simply that small changes to a policy can result in large changes in performance \citep{jordan2018cdf}.
These effects can be seen in Figure \ref{fig:cdf_cartpole}, where there is a quick rise in cumulative probability for a small increase in performance. 
Inspired by \citet{dabney2014adaptive}'s policy percentiles, we propose \textit{performance percentiles}, a score normalization technique that can represent these intricacies.

The probability integral transform shows that projecting a random variable through its CDF transforms the variable to be uniform on $[0,1]$ \citep{dodge2006oxford}. Thus, normalizing an algorithm's performance by its CDF will equally distribute and represent a linear scaling of difficulty across $[0,1]$. When normalizing performance against another algorithm's performance distribution, the normalized score distribution will shift towards zero when the algorithm is worse than the normalizing distribution and shift towards one when it is superior. 
As seen in Figure \ref{fig:cdf_cartpole}, the CDF can be seen as encoding the relative difficulty of achieving a given level of performance, where large changes in an algorithm's CDF output indicate a high degree of difficulty for that algorithm to make an improvement and similarly small changes in output correspond to low change in difficulty.
In this context difficulty refers to the amount of random chance (luck) needed to achieve a given level of performance.

To leverage these properties of the CDF, we define performance percentiles, that use a weighted average of each algorithm's CDF to normalize scores for each environment. 
\begin{defn}[Performance Percentile]
\label{def:performancepercentile}
In an evaluation of algorithms, $\mathcal{A}$, the performance percentile for a score $x$ on an environment, $j$, is $F_{\bar X_j}(x, w_j)$, where $F_{\bar X_j}$ is the mixture of CDFs  $F_{\bar X_j}(x,w_j) \coloneqq \sum_{i=1}^{|\mathcal{A}|} w_{j,i} F_{X_{i,j}}(x)$, with weights $w_j \in \mathbb{R}^{|\mathcal{A}|}$, 
$\sum_{i=1}^{|\mathcal{A}|} w_{j,i} = 1$, and $\forall i \ w_{j,i} \ge 0$.
\end{defn}
So we can say that performance percentiles capture the performance characteristic of an environment relative to some averaged algorithm. We discuss how to set the weights $w_j$ in the next section.

\begin{figure}%[ht]
    \centering
    \includegraphics[width=0.48\textwidth]{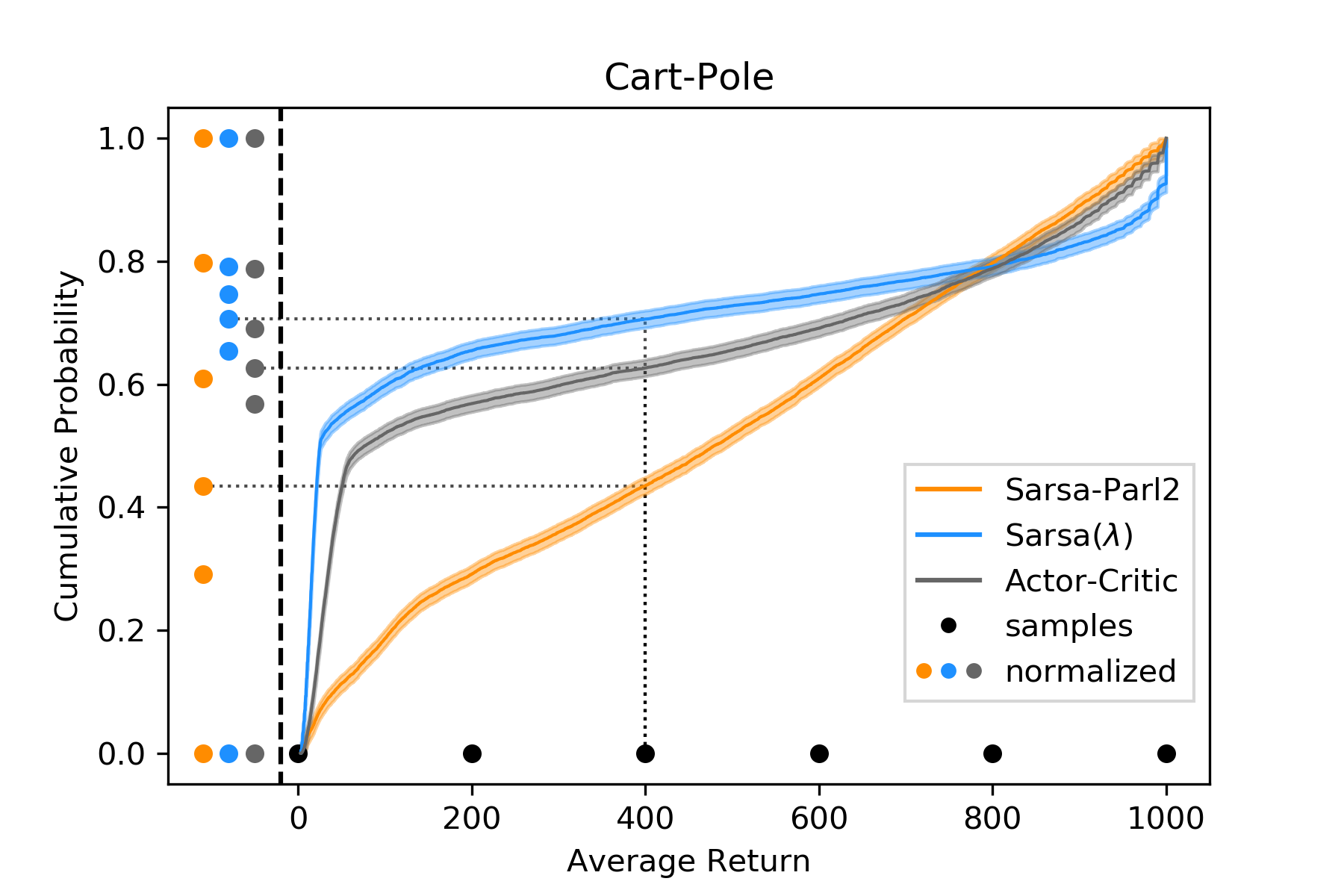}
    \caption{This plot shows the CDF of average returns for the Sarsa-Parl2, Sarsa($\lambda$), and Actor-Critic algorithms on the Cart-Pole environment. Each line represents the empirical CDF using $10,\!000$ trials and the shaded regions represent the $95\%$ confidence intervals. To illustrate how the performance percentiles work, this plot shows how samples of performance (black dots) are normalized by each CDF, producing the normalized scores (colored dots). The correspondence between a single sample and its normalized score is shown by the dotted line.}
    \label{fig:cdf_cartpole}
\end{figure}

Performance percentiles are closely related to the concept of (probabilistic) performance profiles \cite{Dolan2002profiles,barreto2010probeval}. The difference being that performance profiles report the cumulative distribution of normalized performance metrics over a set of tasks (environments), whereas performance percentiles are a technique for normalizing scores on each task (environment).

\subsection{Summarization}
\label{sec:summarization}

A weighting over environments is needed to form an aggregate measure. 
We desire a weighting over environments such that no algorithm can exploit the weightings to increase its ranking. Additionally, for the performance percentiles, we need to determine the weighting of algorithms to use as the reference distribution. Inspired by the work of \citet{balduzzi2018eval}, we propose a weighting of algorithms and environments, using the equilibrium of a two-player game.

In this game, one player, $p$, will try to select an algorithm to maximize the aggregate performance, while a second player, $q$, chooses the environment and reference algorithm to minimize $p$'s score. Player $p$'s pure strategy space, $\mathcal{S}_1$, is the set of algorithms $\mathcal{A}$, i.e., $p$ plays a strategy $s_1=i$ corresponding to an algorithm $i$. Player $q$'s pure strategy space, $\mathcal{S}_2$, is the cross product of a set of environments, $\mathcal{M}$, and algorithms, $\mathcal{A}$, i.e., player $q$ plays a strategy $s_2=(j,k)$ corresponding to a choice of environment $j$ and normalization algorithm $k$.
We denote the pure strategy space of the game by $\mathcal{S} \coloneqq \mathcal{S}_1 \times \mathcal{S}_2$. A strategy, $s \in \mathcal{S}$, can be represented by a tuple $s=(s_1,s_2)=(i,(j,k))$.

The utility of strategy $s$ is measured by a payoff function $u_p \colon \mathcal{S} \to \mathbb{R}$ and $u_q \colon \mathcal{S} \to \mathbb{R}$ for players $p$ and $q$ respectively. The game is defined to be zero sum, i.e., $u_q(s) = -u_p(s)$. We define the payoff function to be 
$u_p(s) \coloneqq \mathbf{E}[F_{X_{k,j}}(X_{i,j})]$.
Both players $p$ and $q$ sample strategies from probability distributions $p \in \Delta(\mathcal{S}_1)$ and $q \in \Delta(\mathcal{S}_2)$, where $\Delta(\mathcal{X})$ is the set of all probability distributions over $\mathcal{X}$.

The equilibrium solution of this game naturally balances the normalization and environment weightings to counter each algorithm's strengths without conferring an advantage to a particular algorithm. 
Thus, the aggregate measure will be useful in answering the general evaluation question.

After finding a solution $(p^*,q^*)$, the aggregate performance measure $y_i$ for an algorithm $i$ defined as
\begin{equation}
\label{eq:aggforalgi}
y_i \coloneqq \sum_{j=1}^{|\mathcal{M}|}\sum_{k=1}^{|\mathcal{A}|} q^*_{j,k} \mathbf{E}[F_{X_{k,j}}(X_{i,j})].
\end{equation}

To find a solution $(p^*,q^*)$, we employ the $\alpha$-Rank technique \citep{omidshafiei2019alpharank}, which returns a stationary distribution over the pure strategy space $\mathcal{S}$. $\alpha$-Rank allows for efficient computation of both the equilibrium and confidence intervals on the aggregate performance \citep{rowland2019alphabounds}. We detail this method and details of our implementation in Appendix \ref{app:alpharank}.

\section{Reporting Results}
\label{sec:reportingresults}

As it is crucial to quantify the uncertainty of all claimed performance measures, we first discuss how to compute confidence intervals for both single environment and aggregate measures, then give details on displaying the results.

\subsection{Quantifying Uncertainty}
\label{sec:aggbound}
In keeping with our objective to have a scientific evaluation, we require our evaluation procedure to quantify any uncertainty in the results. When concerned with only a single environment, standard concentration inequalities can compute confidence intervals on the mean performance. Similarly, when displaying the distribution of performance, one can apply standard techniques for bounding the empirical distribution of performance. 
However, computing confidence intervals on the aggregate has additional challenges.

Notice that in \eqref{eq:aggforalgi} computing the aggregate performance requires two unknown values: $q^*$ and the mean normalized performance, $\mathbf{E}[F_{X_{k,j}}(X_{i,j})]$. 
Since $q^*$ depends on mean normalized performance, any uncertainty in the mean normalized performance results in uncertainty in $q^*$. 
To compute valid confidence intervals on the aggregate performance, the uncertainty through the entire process must be considered.

We introduce a process to compute the confidence intervals, which we refer to as \textit{performance bound propagation} (PBP). 
We represent PBP as a function $\text{PBP} \colon \mathcal{D} \times \mathbb{R} \to \mathbb{R}^{|\mathcal{A}|} \times \mathbb{R}^{|\mathcal{A}|}$, which maps a dataset $D \in \mathcal{D}$ containing all samples of performance and a confidence level $\delta \in (0,0.5]$, to vectors $Y^-$ and $Y^+$ representing the lower and upper confidence intervals, i.e., $(Y^-, Y^+) = \text{PBP}(D,\delta)$.

The overall procedure for PBP is as follows, first compute confidence intervals for each $F_{X_{i,j}}$, then using these intervals compute confidence intervals on each mean normalized performance, next determine an uncertainty set $\mathcal{Q}$ for $q^*$ that results from uncertainty in the mean normalized performance, finally for each algorithm find the minimum and maximum aggregate performance over the uncertainty in the mean normalized performances and $\mathcal{Q}$. 
We provide pseudocode in Appendix \ref{app:confagg} and source code in the repository.

We prove that PBP produces valid confidence intervals for a confidence level $\delta \in (0, 0.5]$ and a dataset $D$ containing $T_{i,j} >1$ samples of performance for all algorithms $i \in \mathcal{A}$ and environments $j \in \mathcal{M}$.
\begin{restatable}{thm}{pbpthm}
% \begin{thm}
If $(Y^-,Y^+) = \text{PBP}(D,\delta)$, then
\begin{equation}
  \Pr \left ( \forall i \in {1,2,\dotsc,|\mathcal{A}|},\  y_i \in [Y^-_i, Y^+_i] \right ) \ge 1-\delta.
\end{equation}
\end{restatable}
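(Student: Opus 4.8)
The plan is to decompose the bound into one probabilistic step, handled by concentration plus a union bound, followed by a purely deterministic propagation step that exploits the fact that every quantity entering \eqref{eq:aggforalgi} is a monotone functional of the underlying CDFs. First I would fix, for each pair $(i,j) \in \mathcal{A} \times \mathcal{M}$, a two-sided confidence band on the CDF $F_{X_{i,j}}$ built from its $T_{i,j}$ i.i.d.\ samples; a concentration inequality for the empirical CDF (for instance Dvoretzky--Kiefer--Wolfowitz) supplies functions $F^-_{X_{i,j}}$ and $F^+_{X_{i,j}}$ with $\Pr(\forall x,\ F^-_{X_{i,j}}(x) \le F_{X_{i,j}}(x) \le F^+_{X_{i,j}}(x)) \ge 1 - \delta_{i,j}$. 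Allocating the budget so that $\sum_{i,j} \delta_{i,j} = \delta$ and taking a union bound (using independence across pairs) defines the good event $E$ on which \emph{all} true CDFs simultaneously lie inside their bands, with $\Pr(E) \ge 1 - \delta$.

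The core of the argument is then to establish the deterministic inclusion $E \subseteq \{\forall i,\ y_i \in [Y^-_i, Y^+_i]\}$, after which the theorem follows at once since $\Pr(\forall i,\ y_i \in [Y^-_i, Y^+_i]) \ge \Pr(E) \ge 1 - \delta$. I would prove the inclusion by pushing band containment through each stage of PBP. Writing the mean normalized performance as the Riemann--Stieltjes integral $\mathbf{E}[F_{X_{k,j}}(X_{i,j})] = \int F_{X_{k,j}}(x)\,\mathrm{d}F_{X_{i,j}}(x)$, this functional is monotone increasing in $F_{X_{k,j}}$ and, via integration by parts, monotone decreasing in $F_{X_{i,j}}$; hence replacing each CDF by the appropriate band endpoint yields, on $E$, an interval $[L_{i,j,k}, U_{i,j,k}]$ guaranteed to contain the true entry. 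Feeding the resulting interval-valued payoff matrix into the $\alpha$-Rank sensitivity analysis of \citet{rowland2019alphabounds} produces a set $\mathcal{Q} \subseteq \Delta(\mathcal{S}_2)$ that, on $E$, contains the true equilibrium weighting $q^*$. Finally, defining $Y^-_i$ and $Y^+_i$ as the minimum and maximum of $\sum_{j,k} q_{j,k}\, v_{j,k,i}$ over $q \in \mathcal{Q}$ and $v_{j,k,i} \in [L_{i,j,k}, U_{i,j,k}]$ forces $y_i$, which corresponds to the feasible point $(q^*, \{\mathbf{E}[F_{X_{k,j}}(X_{i,j})]\})$, to lie in $[Y^-_i, Y^+_i]$.

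The main obstacle is the third stage: certifying that $\mathcal{Q}$ genuinely encloses $q^*$ whenever the payoff entries lie in their intervals. Since $q^*$ is the stationary distribution of the $\alpha$-Rank Markov chain, its dependence on the payoffs is nonlinear and must be controlled through the perturbation bounds of \citet{rowland2019alphabounds}; care is needed to apply those bounds with the correct per-entry intervals and, crucially, to keep the implication \emph{deterministic} rather than high-probability, so that no further confidence budget is consumed beyond the $\delta$ spent on the CDF bands. A secondary but routine difficulty is the final optimization: the objective is bilinear in $(q, v)$, but because the uncertainty enters as a product over a box-and-simplex constraint set the extrema are attained at vertices, so the $\min$ and $\max$ are computable without convexity concerns. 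The validity of the whole argument rests on every propagation step being a worst-case enclosure, so that band containment on $E$ deterministically forces containment of all downstream quantities by construction.
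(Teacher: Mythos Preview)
Your high-level decomposition---one probabilistic step (simultaneous DKW bands on all $F_{X_{i,j}}$ via a union bound with budget $\delta' = \delta/(|\mathcal A||\mathcal M|)$) followed by purely deterministic worst-case propagation---is exactly the structure the paper uses. The first two propagation stages also match: the paper bounds $z_{i,j,k}=\mathbf E[F_{X_{k,j}}(X_{i,j})]$ via Anderson's inequality, which is precisely the monotonicity-of-the-Riemann--Stieltjes-integral argument you sketch, yielding deterministic boxes $[Z^-_{i,j,k},Z^+_{i,j,k}]$ on the event $E$.

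The substantive divergence is in the third stage. You propose to pass the interval-valued payoffs through the $\alpha$-Rank sensitivity bounds of \citet{rowland2019alphabounds} to obtain an explicit enclosure $\mathcal Q\ni q^*$, and then solve the bilinear $\min/\max$ over $\mathcal Q\times\prod[L,U]$. The paper instead never materializes $\mathcal Q$: it observes (its Lemma~3) that $y_i$ equals the average reward of the damped Markov chain $\tilde C=\gamma C+(1-\gamma)/|\mathcal S|$ with reward vector $R_i(s)=z_{i,j,k}$, so that uncertainty in the payoffs becomes an interval constraint $C\in[C^-,C^+]$ on the transition matrix itself, and $Y_i^{\pm}$ are obtained by optimizing the average reward directly over this interval MDP via LP/policy iteration. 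The paper remarks explicitly that first extracting bounds on the stationary distribution and \emph{then} optimizing---your route---``leads to looser bounds\ldots because it ignores the correlations in the confidence intervals of the stationary distribution.''

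This creates a small but real gap in your argument as a proof of the stated theorem: PBP, as actually defined, outputs the \emph{tighter} MDP-optimized intervals, and validity of your wider $\mathcal Q$-based intervals does not by itself certify the tighter ones. To close it you would need the paper's value-function/average-reward equivalence (its Lemma~3) so that the deterministic inclusion runs through $\{C:\text{true }C\in[C^-,C^+]\}$ rather than through a decoupled $\mathcal Q$. Everything else in your plan---the single confidence budget, the monotone propagation, and the insistence that all downstream steps be deterministic enclosures---is correct and is exactly how the paper proceeds.
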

\begin{proof}
Although the creation of valid confidence intervals is critical to this contribution, due to space restrictions it is presented in Appendix \ref{app:confagg}.
\end{proof}

\subsection{Displaying Results}

\begin{figure}%[ht]
    \centering
    \includegraphics[width=0.48\textwidth]{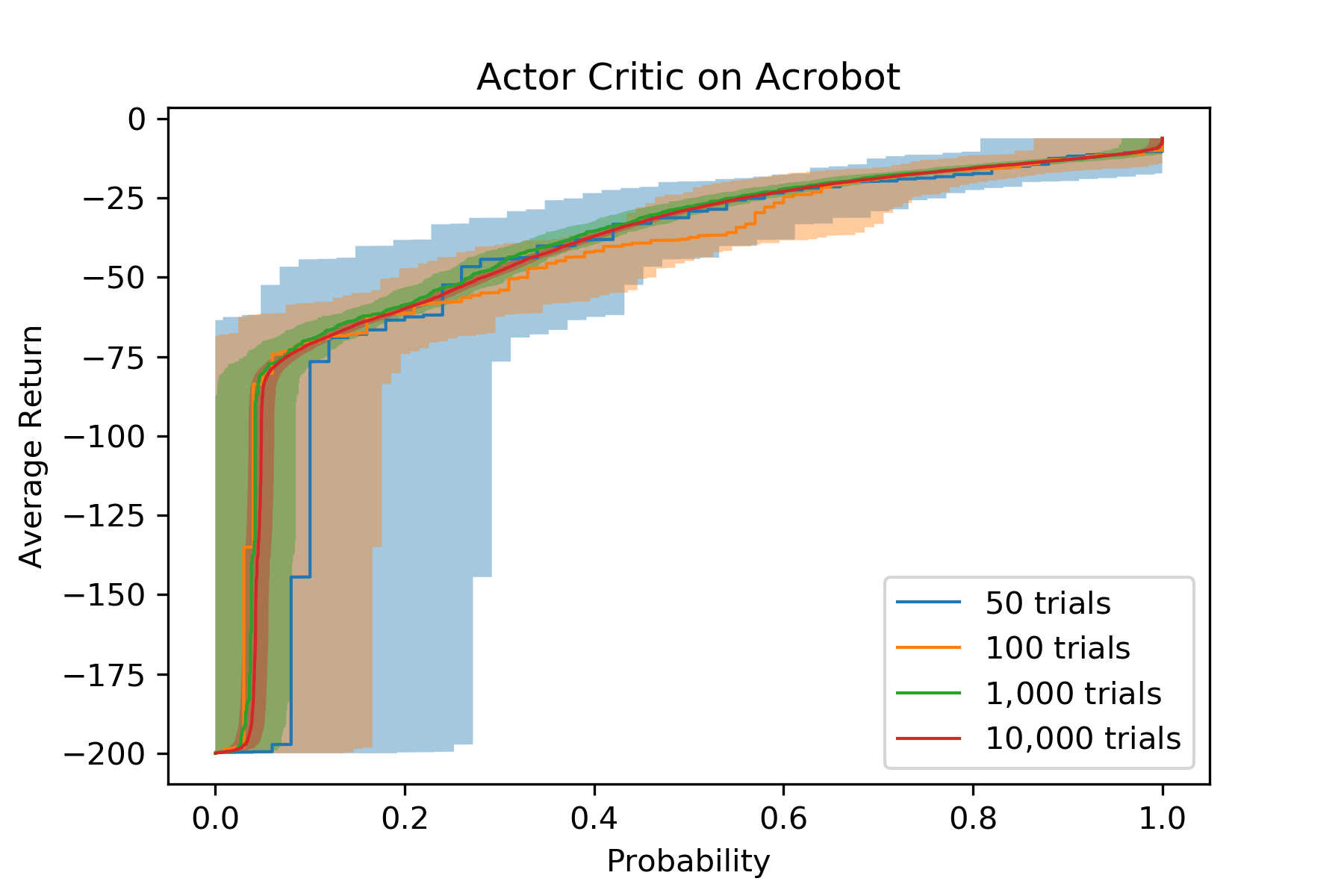}
    \caption{This plot shows the distribution of average returns for the Actor-Critic algorithm on the Acrobot environment. The $x$-axis represents a probability and the $y$-axis represents the average return such that the proportion of trials that have a value less than or equal to $y$ is $x$, e.g., at $x=0.5$, $y$ is the median. Each line represents the empirical quantile function using a different number of trials and the shaded regions represent the $95\%$ confidence intervals computed using concentration inequalities. 
    In this plot, the larger the area under the curve, the better the performance.
    This plot highlights the large amount of uncertainty when using small sample sizes and how much it decreases with more samples. }
    \label{fig:return_ab}
\end{figure}

In this section, we describe our method for reporting the results. There are three parts to our method: answering the stated hypothesis, providing tables and plots showing the performance and ranking of algorithms for all environments, and the aggregate score, then for each performance measure, provide confidence intervals to convey uncertainty.

The learning curve plot is a standard in RL and displays a performance metric (often the return) over regular intervals during learning. 
While this type of plot might be informative for describing some aspects of the algorithm's performance, it does not directly show the performance metric used to compare algorithms, making visual comparisons less obvious. 
Therefore, to provide the most information to the reader, we suggest plotting the distribution of performance for each algorithm on each environment. Plotting the distribution of performance has been suggested in many fields as a means to convey more information, \citep{Dolan2002profiles,farahmand2010cdfevo,reimers2017scoredist,cohen2018distributed}. 
Often in RL, the object is to maximize a metric, so we suggest showing the quantile function over the CDF as it allows for a more natural interpretation of the performance, i.e., the higher the curve, the better the performance \citep{bellemare13arcade}. Figure \ref{fig:return_ab} show the performance distribution with $95\%$ confidence intervals for different sample sizes. 
It is worth noting that when tuning hyperparameters the data needed to compute these distributions is already being collected, but only the results from the tuned runs are being reported. 
By only reporting only the tuned performance it shows what an algorithm can achieve not what it is likely to achieve.

\section{Experimental Results}
\label{sec:experiments}
In this section, we describe and report the results of experiments to illustrate how this evaluation procedure can answer the general evaluation question and identify when a modification to an algorithm or its definition improves performance. We also investigate the reliability of different bounding techniques on the aggregate performance measure.

\subsection{Experiment Description}
To demonstrate the evaluation procedure we compare the algorithms: Actor-Critic with eligibility traces (AC) \cite{sutton2018reinforcement}, Q($\lambda$), Sarsa($\lambda$), \cite{suttonbaro1998book}, NAC-TD \citep{morimura2005ntd,degris2012inac,thomas2014bias}, and proximal policy optimization (PPO) \citep{schulman2017ppo}. 
The learning rate is often the most sensitive hyperparameter in RL algorithms. 
So, we include three versions of Sarsa$(\lambda)$, Q$(\lambda)$, and AC: a base version, a version that scales the step-size with the number of parameters (e.g., Sarsa$(\lambda)$-s), and an adaptive step-size method, Parl2 \citep{dabney2014adaptive}, that does not require specifying the step size. 
Since none of these algorithms have an existing complete definition, we create one by randomly sampling hyperparameters from fixed ranges. We consider all parameters necessary to construct each algorithm, e.g., step-size, function approximator, discount factor, eligibility trace decay. For the continuous state environments, each algorithm employs linear function approximation using the Fourier basis \cite{konidaris2011fourier} with a randomly sampled order. See Appendix \ref{app:definitions} for full details of each algorithm.

These algorithms are evaluated on $15$ environments, eight discrete MDPs, half with stochastic transition dynamics, and seven continuous state environments: Cart-Pole \citep{florian2007cartpole}, Mountain Car \citep{suttonbaro1998book}, Acrobot \citep{sutton1995acrobot}, and four variations of the pinball environment \citep{konidaris2009pinball,geramifard2015rlpy}. For each independent trial, the environments have their dynamics randomly perturbed to help mitigate environment overfitting \citep{whiteson2011eval}; see code for details. For further details about the experiment see Appendix \ref{app:environments}.

While these environments have simple features compared to the Arcade Learning Environment \citep{bellemare13arcade}, they remain useful in evaluating RL algorithms for three reasons. First is that experiments finish quickly. Second, the environments provide interesting insights into an algorithm's behavior. Third, as our results will show, there is not yet a complete algorithm that can reliably solve each one.

We execute each algorithm on each environment for $10,\!000$ trials. While this number of trials may seem excessive, our goal is to detect a statistically meaningful result. Detecting such a result is challenging because the variance of RL algorithms performance is high; we are comparing $|\mathcal{A}| \times |\mathcal{M}| = 165$ random variables, and we do not assume the performances are normally distributed. Computationally, executing ten thousand trials is not burdensome if one uses an efficient programming language such as Julia \citep{bezanson2017julia} or C++, where we have noticed approximately two orders of magnitude faster execution than similar Python implementations. We investigate using smaller sample sizes at the end of this section.

% Please add the following required packages to your document preamble:
% \usepackage[table,xcdraw]{xcolor}
% If you use beamer only pass "xcolor=table" option, i.e. \documentclass[xcolor=table]{beamer}
\begin{table}
\begin{tabular}{lcr}
\toprule %\hline
\multicolumn{3}{c}{\textbf{Aggregate Performance}} \\
\midrule
\textbf{Algorithms}         & \multicolumn{1}{c}{\textbf{Score}} & \textbf{Rank} \\ \midrule %\hline
\rowcolor[HTML]{EFEFEF} 
Sarsa-Parl2        & $0.4623 (\textbf{0.3904}, 0.5537)$       & 1 (2,1)         \\
Q-Parl2            & $0.4366 (\textbf{0.3782}, 0.5632)$       & 2 (2,1)        \\
\rowcolor[HTML]{EFEFEF} 
AC-Parl2           & $0.1578 (0.0765, \textbf{0.3129})$       & 3 (11,3)        \\
Sarsa$(\lambda)$-s & $0.0930 (0.0337, 0.2276)$                & 4 (11,3)        \\
\rowcolor[HTML]{EFEFEF} 
AC-s               & $0.0851 (0.0305, 0.2146)$                & 5 (11,3)        \\
Sarsa$(\lambda)$   & $0.0831 (0.0290, 0.2019)$                & 6 (11,3)        \\
\rowcolor[HTML]{EFEFEF} 
AC                 & $0.0785 (0.0275, 0.2033)$                & 7 (11,3)        \\
Q$(\lambda)$-s     & $0.0689 (0.0237, 0.1973)$                & 8 (11,3)        \\
\rowcolor[HTML]{EFEFEF} 
Q$(\lambda)$       & $0.0640 (0.0214, 0.1780)$                & 9 (11,3)        \\
\rowcolor[HTML]{EFEFEF} 
NAC-TD             & $0.0516 (0.0180, 0.1636)$                & 10 (11,3)       \\
PPO                & $0.0508 (0.0169, 0.1749)$                & 11 (11,3)       \\
\bottomrule %\hline
\end{tabular}
\caption{Aggregate performance measures for each algorithm and their rank. The parentheses contain the intervals computed using PBP and together all hold with $95\%$ confidence. The bolded numbers identify the best ranked statistically significant differences.}
\label{tab:agg_res}
\end{table}

\subsection{Algorithm Comparison}

\begin{figure}
    \centering
    \includegraphics[width=0.48\textwidth]{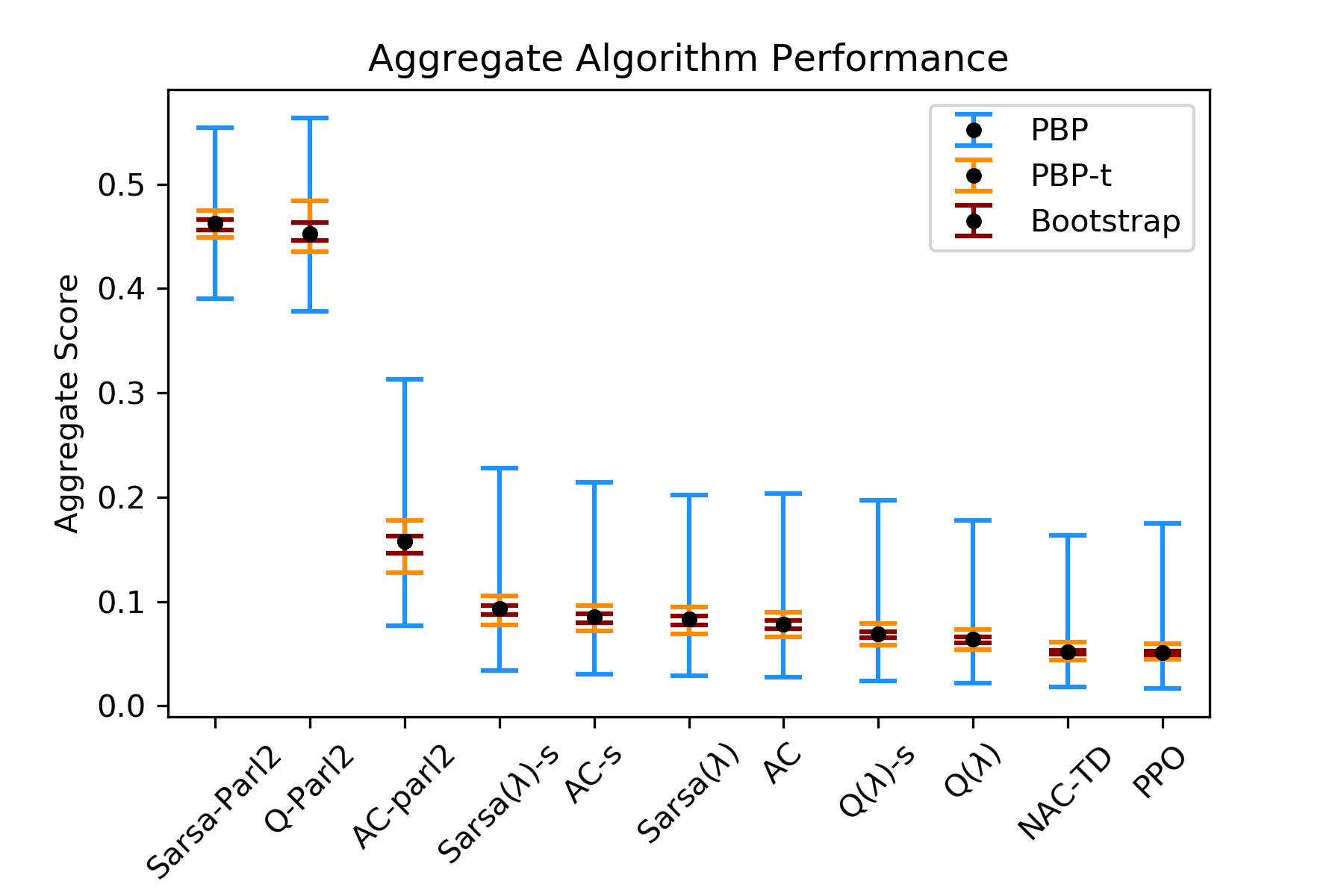}
    \caption{The aggregate performance for each algorithm with confidence intervals using PBP, PBP-t, and bootstrap. The width of each interval is scaled so all intervals hold with $95\%$ confidence.}
    \label{fig:aggperf}
\end{figure}

The aggregate performance measures and confidence intervals are illustrated in Figure \ref{fig:aggperf} and given in Table \ref{tab:agg_res}. Appendix \ref{app:indperf} lists the performance tables and distribution plots for each environment. Examining the empirical performances in these figures, we notice two trends. The first is that our evaluation procedure can identify differences that are not noticeable in standard evaluations. For example, all algorithms perform near optimally when tuned properly (indicated by the high end of the performance distribution). The primary differences between algorithms are in the frequency of high performance and divergence (indicated by low end of the performance distribution). Parl2 methods rarely diverge, giving a large boost in performance relative to the standard methods.

The second trend is that our evaluation procedure can identify when theoretical properties do or do not make an algorithm more usable. For example, Sarsa($\lambda$) algorithms outperform their Q($\lambda$) counterparts. This result \emph{might} stem from the fact that Sarsa($\lambda$) is known to converge with linear function approximation \citep{perkins2002linearsarsa} while Q($\lambda$) is known to diverge \citep{baird1995residual,wiering2004divql}. Additionally, NAC-TD performs worse than AC despite that natural gradients are a superior ascent direction. This result is due in part because it is unknown how to set the three step-sizes in NAC-TD, making it more difficult to use than AC. Together these observations point out the deficiency in the way new algorithms have been evaluated. That is, tuning hyperparameters hides the lack of knowledge required to use the algorithm, introducing bias that favors the new algorithm. In contrast, our method forces this knowledge to be encoded into the algorithm, leading to a more fair and reliable comparison.

\subsection{Experiment Uncertainty}
 While the trends discussed above might hold true in general, we must quantify our uncertainty. Based on the confidence intervals given using PBP, we claim with $95\%$ confidence that on these environments and according to our algorithm definitions, Sarsa-Parl2 and Q-Parl2 have a higher aggregate performance of average returns than all other algorithms in the experiment.
It is clear that $10,\!000$ trials per algorithm per environment is not enough to detect a unique ranking of algorithms using the nonparametric confidence intervals in PBP.
We now consider alternative methods, PBP-t, and the percentile bootstrap.
PBP-t replaces the nonparameteric intervals in PBP with ones based on the Student's t-distribution.
We detail these methods in Appendix \ref{app:altbounds}.
From Figure \ref{fig:aggperf}, it is clear that both alternative bounds are tighter and thus useful in detecting differences. 
Since assumptions of these bounds are different and not typically satisfied, it is unclear if they are valid.

% Please add the following required packages to your document preamble:
% \usepackage[table,xcdraw]{xcolor}
% If you use beamer only pass "xcolor=table" option, i.e. \documentclass[xcolor=table]{beamer}
\begin{table}
\begin{tabular}{|l|lr|lr|lr|}
\hline
          \multicolumn{7}{|c|}{\textbf{Confidence Interval Performance}} \\ \hline
           & \multicolumn{2}{c|}{PBP}  & \multicolumn{2}{c|}{PBP-t}  & \multicolumn{2}{c|}{Bootstrap}       \\ \hline
Samples    & FR & SIG & FR & SIG & FR & SIG \\ \hline
\rowcolor[HTML]{EFEFEF} 
$10$       & 0.0 & 0.0                 & 1.000 & 0.00               & 0.112 & 0.11 \\
$30$       & 0.0 & 0.0                 & 0.000 & 0.00               & 0.092 & 0.37  \\
\rowcolor[HTML]{EFEFEF} 
$100$      & 0.0 & 0.0                 & 0.000 & 0.02               & 0.084 & 0.74  \\
$1,\!000$  & 0.0 & 0.0                 & 0.000 & 0.34               & 0.057 & 0.83  \\
\rowcolor[HTML]{EFEFEF} 
$10,\!000$ & 0.0 & 0.33                & 0.003 & 0.83               & 0.069 & 0.83  \\ \hline
\end{tabular}
\caption{Table showing the failure rate (FR) and proportion of significant pairwise comparison (SIG) identified for $\delta=0.05$ using different bounding techniques and sample sizes. The first column represents the sample size. The second, third, and fourth columns represent the results for PBP, PBP-t, and bootstrap bound methods respectively. For each sample size, $1,\!000$ experiments were conducted.}
\label{tab:boundresults}
\end{table}

To test the different bounding techniques, we estimate the failure rate of each confidence interval technique at different sample sizes. For this experiment we execute $1,\!000$ trials of the evaluation procedure using sample sizes (trials per algorithm per environment) of $10$, $30$, $100$, $1,\!000$, and $10,\!000$. 
There are a total of $11.14$ million samples per algorithm per environment. To reduce computation costs, we limit this experiment to only include Sarsa($\lambda$)-Parl2, Q($\lambda$)-Parl2, AC-Parl2, and Sarsa($\lambda$)-s. Additionally, we reduce the environment set to be the discrete environments and Mountain Car. We compute the failure rate of the confidence intervals, where a valid confidence interval will have a failure rate less than or equal to $\delta$, e.g., for $\delta=0.05$ failure rate should be less than $\le 5\%$. We report the failure rate and the proportion of statistically significant pairwise comparisons in Table \ref{tab:boundresults}.
All methods use the same data, so the results are not independent.

The PBP method has zero failures indicating it is overly conservative. 
The failure rate of PBP-t is expected to converge to zero as the number of samples increase due to the central limit theorem. 
PBP-t begins to identify significant results at a sample size of $1,\!000$, but it is only at $10,\!000$ that it can identify all pairwise differences.\footnote{Sarsa-Parl2 and Q-Parl2 have similar performance on discrete environments so we consider detecting $83\%$ of results optimal.} 
The bootstrap technique has the tightest intervals, but has a high failure rate. 

These results are stochastic and will not necessarily hold with different numbers of algorithms and environments. So, one should use caution in making claims that rely on either PBP-t or bootstrap. Nevertheless, to detect statistically significant results, we recommend running between $1,\!000$, and $10,\!000$ samples, and using the PBP-t over bootstrap.

While this number of trials seems, high it is a necessity as comparison of multiple algorithms over many environments is a challenging statistical problem with many sources of uncertainty.
Thus, one should be skeptical of results that use substantially fewer trials. 
Additionally, researchers are already conducting many trials that go unreported when tuning hyperparameters.
Since our method requires no hyperparameter tuning, researchers can instead spend the same amount of time collecting more trials that can be used to quantify uncertainty.

There are a few ways that the number of trials needed can be reduced.
The first is to think carefully about what question one should answer so that only a few algorithms and environments are required.
The second is to use active sampling techniques to determine when to stop generating samples of performance for each algorithm environment pair \citep{rowland2019alphabounds}. 
It is important to caution the reader that this process can bias the results if the sequential tests are not accounted for \citep{howard2018confseq}.

Summarizing our experiments, we make the following observations. Our experiments with complete algorithms show that there is still more work required to make standard RL algorithms work reliably on even extremely simple benchmark problems. As a result of our evaluation procedure, we were able to identify performance differences in algorithms that are not noticeable under standard evaluation procedures. The tests of the confidence intervals suggest that both PBP and PBP-t provide reliable estimates of uncertainty. These outcomes suggest that this evaluation procedure will be useful in comparing the performance of RL algorithms.

\section{Related Work}

This paper is not the first to investigate and address issues in empirically evaluating algorithms. The evaluation of algorithms has become a signficant enough topic to spawn its own field of study, known as \textit{experimental algorithmics} \citep{fleischer2002algorithmics,mcgeoch2012guide}.

In RL, there have been significant efforts to discuss and improve the evaluation of algorithms \citep{whiteson2011special}.
One common theme has been to produce shared benchmark environments, such as those in the annual reinforcement learning competitions \citep{whiteson2008competition,dimitrakakis2014competition}, the Arcade Learning Environment \citep{bellemare13arcade}, and numerous others which are to long to list here.
Recently, there has been a trend of explicit investigations into the reproducibility of reported results \citep{henderson2018deep,islam2017reproducibility,khetarpal2018reevaluate,colas2018seeds}. These efforts are in part due to the inadequate experimental practices and reporting in RL and general machine learning \citep{pineau2020reproducibility,lipton2018trends}.
Similar to these studies, this work has been motivated by the need for a more reliable evaluation procedure to compare algorithms. The primary difference in our work to these is that the knowledge required to use an algorithm gets included in the performance metric.

An important aspect of evaluation not discussed so far in this paper is competitive versus scientific testing \citep{hooker1995testing}. Competitive testing is the practice of having algorithms compete for top performance on benchmark tasks. Scientific testing is the careful experimentation of algorithms to gain insight into how an algorithm works. 
The main difference in these two approaches is that competitive testing only says \textit{which} algorithms worked well but not \textit{why}, whereas scientific testing directly investigates the what, when, how, or why better performance can be achieved. 

There are several examples of recent works using scientific testing to expand our understanding of commonly used methods. 
\citet{lyle2019distribuional} compares distributional RL approaches using different function approximation schemes showing that distributional approaches are only effective when nonlinear function approximation is used. 
\citet{tucker2018mirage} explore the sources of variance reduction in action dependent control variates showing that improvement was small or due to additional bias. \citet{witty2018measuring} and \citet{atrey2020saliency} investigate learned behaviors of an agent playing Atari 2600 games using ToyBox \cite{foley2018toybox}, a tool designed explicitly to enable carefully controlled experimentation of RL agents. 
While, at first glance the techniques developed here seems to be only compatible with competitive testing, this is only because we specified question with a competitive answer. 
The techniques developed here, particularly complete algorithm definitions, can be used to accurately evaluate the impact of various algorithmic choices. This allows for the careful experimentation to determine what components are essential to an algorithm.

\section{Conclusion}
\label{sec:conclusion}

The evaluation framework that we propose provides a principled method for evaluating RL algorithms. 
This approach facilitates fair comparisons of algorithms by removing unintentional biases common in the research setting. 
By developing a method to establish high-confidence bounds over this approach, we provide the framework necessary for reliable comparisons. 
We hope that our provided implementations will allow other researchers to easily leverage this approach to report the performances of the algorithms they create.

% Acknowledgements should only appear in the accepted version.
\section*{Acknowledgements}
The authors would like to thank Kaleigh Clary, Emma Tosch, and members of the Autonomous Learning Laboratory: Blossom Metevier, James Kostas, and Chris Nota, for discussion and feedback on various versions of this manuscript.
Additionally, we would like to thank the reviewers and meta-reviewers for their comments, which helped improved this paper.

This work was performed in part using high performance computing equipment obtained under a grant from the Collaborative R\&D Fund managed by the Massachusetts Technology Collaborative.
This work was supported in part by a gift from Adobe.
This work was supported in part by the Center for Intelligent Information Retrieval. Any opinions, findings and conclusions or recommendations expressed in this material are those of the authors and do not necessarily reflect those of the sponsor.
Research reported in this paper was sponsored in part by the CCDC Army Research Laboratory under Cooperative Agreement W911NF-17-2-0196 (ARL IoBT CRA). The views and conclusions contained in this document are those of the authors and should not be interpreted as representing the official policies, either expressed or implied, of the Army Research Laboratory or the U.S. Government. The U.S. Government is authorized to reproduce and distribute reprints for Government purposes notwithstanding any copyright notation herein.
%

% In the unusual situation where you want a paper to appear in the
% references without citing it in the main text, use \nocite
% \nocite{langley00}

\bibliography{main}
\bibliographystyle{icml2020}

\clearpage
\appendix
\onecolumn{
\section*{Appendix}
\section{Other Normalization Methods}
\label{app:normalization}

A simple normalization technique is to map scores on an environment $j$ that are in the range $[a_j,b_j]$ to $[0,1]$, i.e., $g(x,j) \coloneqq (x-a_j) / (b_j - a_j)$ \citep{bellemare13arcade}. However, this can result in normalized performance measures that cluster in different regions of $[0,1]$ for each environment. For example, consider one environment where a the minimum is $-100$, the maximum is $10$ and a uniform random policy can score around $0$. Similarly consider a second environment where the minimum score is $10$, maximum is $1,\!000$, and random gets around $20$. On the first environment, algorithms will tend to have a normalized performance near $1$ and in the second case most algorithms will have a normalized performance near $0$. So in the second environment algorithms will likely appear worse than algorithms in the first regardless of how close to optimal they are. This means the normalized performances are not really comparable.

A different version of this approach uses the minimum and maximum mean performance of each algorithm \citep{bellemare13arcade,balduzzi2018eval}. Let $\hat \mu_{i,j}$ be the sample mean of $X_{i,j}$. Then this normalization method uses the following function, $\bar g(i,j) \coloneqq (\hat \mu_{i,j} - \min_{i'} \hat \mu_{i',j}) / (\max_{i'} \hat \mu_{i',j} - \min_{i'} \hat \mu_{i',j})$. This sets the best algorithm's performance on each environment to $1$ and the worst to $0$, spreading the range of values out over the whole interval $[0,1]$. This normalization technique does not correct for nonlinear scaling of performance. As a result algorithms could be near $0$ or $1$ if there is an outlier algorithm that does very well or poorly. For example, one could introduce a terrible algorithm that just chooses one action the whole time. This makes the environment seem easier as all scores would be near $1$ except for this bad algorithm. We would like the evaluation procedure to be robust to the addition of poor algorithms.

An alternative normalization technique proposed by \citet{whiteson2011eval} uses the probability that one algorithm outperforms another on an environment, $j$, i.e., $\Pr(X_{i,j} \ge X_{k,j})$. This technique is intuitive and straight forward to estimate but neglects the difference in score magnitudes. For example, consider that algorithm $i$ always scores a $1.0$ and algorithm $k$ always scores $0.99$, the probability that $i$ is better than $k$ is $1$, but the difference between them is small, and the normalized score of $1.0$ neglects this difference.

\section{$\alpha$-Rank and our Implementation}
\label{app:alpharank}

The $\alpha$-Rank procedure finds a solution to a game by computing the stationary distribution of strategy profiles when each player is allowed to change their strategy. This is done by constructing a directed graph where nodes are pure strategies and edges have weights corresponding to the probability that one of the players switches strategies. This graph can be represented by a Markov matrix, $C \in [0,1]^{|\mathcal{S}| \times |\mathcal{S}|}$. The entry $C_{s,s'}$ corresponds to a probability of switching from a strategy $s$ to $s'$. Only one player is allowed to change strategies at a time, so the possible transitions for a strategy $s=(i,(j,k))$, are any strategies $s'=(i',(j,k))$ or $s'=(i,(j',k'))$ for all $i',k' \in \mathcal{A}$ and $j' \in \mathcal{M}$.

The typical $\alpha$-Rank procedure uses transition probabilities, $C_{s,s'}$, that are based on a logistic transformation of the payoff difference $u_l(s') - u_l(s)$. These differences are scaled by a parameter $\alpha$ and as $\alpha$ approaches $\infty$, the transition matrix approximates the Markov Conley chain (MCC), which is the motivation for using $\alpha$-Rank as a solution concept for games. See the work of \citet{omidshafiei2019alpharank} for more detailed information. 
The entries of the matrix for valid transitions are:
\begin{equation}
    C_{s,s'} =
    \begin{cases}
      \eta \frac{1-\exp{\left (-\alpha (u_l(s')-u_l(s)) \right )}}{1-\exp{\left (-\alpha n(u_l(s')-u_l(s)) \right )}} & \text{if $u_l(s') \!= u_l(s)$}\\
      \frac{\eta}{n} & \text{if $u_l(s') = u_l(s)$}\\
    \end{cases}
    \text{ and }  C_{s,s} = 1 - \sum_{s' \ne s}C_{s,s'},
\end{equation}
where $\eta = (\sum_{k} |\mathcal{S}^k|-1)^{-1}$, $l$ represents the player who switched from strategy $s$ to $s'$, $n$ is the population constant (we set it to $50$ following the prior work).
The equilibrium over strategies is then given by the stationary distribution $d$ of the Markov chain induced by $C$, i.e., $d$ is a distribution such that $d = d \tilde C$. The equilibrium solution $p^*, q^*$ are then the sum of probabilities in $d$ for each strategy, i.e., $p^*_{s_1} = \sum_{s_2 \in \mathcal{S}_2} d_{s_1,s_2}$ and $q^*_{s_2} = \sum_{s_1 \in \mathcal{S}_1} d_{s_1,s_2}$.
The aggregate performance can then be computed using $q^*$ as in \eqref{eq:aggforalgi}.

Theoretically, the hyperparameter $\alpha$ could be chosen arbitrarily high and the matrix $C$ would still be irreducible, i.e., for all $s \in \mathcal{S}$, $d_{s} > 0$ and $d$ is unique. However, due to numerical precision issues, a high value of $\alpha$ sets transition probabilities to zero for some dominated strategies, i.e., $u_l(s') < u_l(s)$, which can result in a matrix that is reducible. The suggested method to chose $\alpha$ is to tune it on a logarithmic scaled to find the highest value such that the transition matrix, $C$, is still irreducible \citep{omidshafiei2019alpharank}. 

This strategy works when the payoffs are known, but when they represent empirical samples of performance, then the value of $\alpha$ chosen will depend on the empirical payoff functions. Setting $\alpha$ based solely on the empirical payoffs could introduce bias to the matrix based on that sample. So we need a different solution without a data dependent hyperparameter.

In the MCC graph construction, all edges leading to strategies with strictly greater payoffs have the same positive weight. All edges that lead to strategies with the same payoff have he same weight but less than that of the strictly greater payoff. There are no transitions to strategies with worse payoffs. 
As $\alpha \to \infty$ the transitions probabilities quickly saturate to $\eta$ if $u_l(s') > u_l(s)$ and $0$ if $u_l(s') < u_l(s)$. So we construct the transition matrix, $C$, differently using the saturation values to set the transition probabilities $C$ is close to the MCC construction. 
The entries for $C$ that represent valid transitions in the graph are:
\begin{equation}
\label{eq:cmat}
    C_{s,s'} \coloneqq
    \begin{cases}
      \eta & \text{if $u_l(s') > u_l(s)$}\\
      \frac{\eta}{m} & \text{if $u_l(s') = u_l(s)$}\\
      0 & \text{otherwise}
    \end{cases}
    \text{ }  C_{s,s} \coloneqq 1 - \sum_{s' \ne s}C_{s,s'}.
\end{equation}
However, this often makes the transition matrix reducible, i.e., the stationary distribution might have mass on only one strategy. 
To ensure $C$ is irreducible we follow the damping approach used in PageRank \citep{page1999pagerank}, i.e., $\tilde C = \gamma C + (1-\gamma) (1/|\mathcal{S}|)$, where $\gamma \in (0,1)$ is a hyperparameter and $1-\gamma$ represents the probability of randomly switching to any strategy in $\mathcal{S}$. During a Monte-Carlo simulation of transitions through $\tilde C$, states will transition from $s$ to $s'$ according to $C$, but with probability $1-\gamma$ the transition ignores $C$ and switches to some strategy $s' \in \mathcal{S}$ chosen uniformly at random.

For $\gamma=1$ the matrix is unchanged and represents the MCC solution, but is reducible. For $\gamma$ near one, the stationary distribution will be similar to the solution given by the MCC solution with high weight placed on dominate strategies and small weight given to weak ones. As $\gamma \to 0$ the stationary distribution becomes more uniform as it is only considering shorter sequences of transitions before a random switch occurs. 
This method differs from the infinite-$\alpha$ approach presented by \citet{rowland2019alphabounds}, but in the limit as $\gamma \to 1$ and $\alpha \to \infty$, the solutions have small differences. The approach using $\gamma$ has a benefit in that there is no data dependent hyperparameter and it has a simple interpretation.

We chose to set $\gamma = (|\mathcal{S}|-1) / (|\mathcal{S}|)$ so that the expected number of transitions to occur before a random jump is $|\mathcal{S}|$. This allows for propagation of transition probabilities to cover every strategy combination. We could have chosen to set $\gamma$ near one, e.g., $\gamma = 1-10^{-8}$, but this would make the computation of the confidence intervals take longer. This is because optimizing the $C$ within confidence intervals $[C^-, C^+]$ (defined in the next section) is equivalent finding the optimal value function of a Markov decision process (MDP) with a discount parameter of $\gamma$. See the work of \citet{kerchove2007maxpage,fercoq2013pagerankopt,csaji2014pagerankopt} for more information on this connection. Solving and MDP with a discount $\gamma$ near $1.0$ causes the optimization process of value iteration and policy iteration to converge slower than if $\gamma$ is small. So we chose $\gamma$ such that it could still find solutions near the MCC solution, but remain computationally efficient.

Using this new definition of $C$ we use the following alternative but equivalent method to compute the aggregate performance more efficiently \citep{fercoq2013pagerankopt}:
\begin{align}
\label{eq:aggalgivalue}
    y_i = \frac{1 - \gamma}{|\mathcal{S}|} \sum_{s \in \mathcal{S}} v(s) \quad
    v = (I - \gamma C)^{-1} R_i,
\end{align}
where $R_i \in \mathbb{R}^{|\mathcal{S}|}$ is a vector with entries $R_i(s) \coloneqq \mathbf{E}[F_{X_{k,j}}(X_{i,j})]$ with $s=(s_1,s_2)$ and $s_2 = (j,k)$. Notice that $s_1$ is ignored because $i$ is already specified by $R_i$.

\section{Confidence Intervals on the Aggregate Performance}
\label{app:confagg}

\begin{table}
\centering
\begin{tabular}{|r|l|} 
\hline 
\multicolumn{2}{|c|}{\textbf{Symbol List}} \\
\hline
\multicolumn{1}{|c|}{\textbf{Symbol}}   & \multicolumn{1}{c|}{\textbf{Description}}                                                                                                              \\ 
\hline
\rowcolor[HTML]{EFEFEF} 
$\mathcal{A}$                             & set of algorithms in the evaluation                                                                                                                   \\
$\mathcal{M}$                             & set of environments in the evaluation                                                                                                                  \\
\rowcolor[HTML]{EFEFEF} 
$X_{i,j}$                                 & random variable representing performance of algorithm $i$ on environment $j$                                                                           \\
$T_{i,j}$                                 & number of samples of performance for algorithm $i$ on environment $j$                                                                                  \\
\rowcolor[HTML]{EFEFEF} 
$x_{i,j,t}$                               & the~$t^\text{th}$sample of performance of algorithm~ $i$ on environment $j$ and sorted such that $x_{i,j,t-1} \le x_{i,j,t}$                           \\
$D$                                       & data set containing all samples of performance for each algorithm on each environment                                                                  \\
\rowcolor[HTML]{EFEFEF} 
$y \in \mathbb{R}^{|\mathcal{A}|}$        & $y_i$ is the aggregate performance for each algorithm $i$ \\
$Y^-,Y^+ \in \mathbb{R}^{|\mathcal{A}|}$  & lower and upper confidence intervals on $y$ computed using $D$                           \\
\rowcolor[HTML]{EFEFEF} 
$\delta \in (0,0.5]$ & confidence level for the aggregate performance                                                                \\
$F_{X_{i,j}}$                             & cumulative distribution function (CDF) of$X_{i,j}$ and is also used for normalization                                                               \\
\rowcolor[HTML]{EFEFEF} 
$\hat F_{X_{i,j}}$                        & empirical cumulative distribution function constructed using samples $x_{i,j,\cdot}$ \\
$F^-_{X_{i,j}}$, $F^+_{X_{i,j}}$          & lower and upper confidence intervals on $F_{X_{i,j}}$ computed using $D$                                                                               \\
\rowcolor[HTML]{EFEFEF} 
$z_{i,j,k}$                               & performance of algorithm $i$, i.e., $z_{i,j,k} = \mathbf{E}[F_{X_{k,j}}(X_{i,j})]$ \\
$Z^-_{i,j,k},Z^+_{i,j,k}$                 & lower and upper confidence intervals on $z_{i,j,k}$ computed using $D$. \\
\rowcolor[HTML]{EFEFEF} 
$s_1 \in \mathcal{S}_1$                   & strategy for player $p$ where $\mathcal{S}_1 = \mathcal{A}$ and $s_1$ is often denoted using $i $                                                      \\
$s_2 \in \mathcal{S}_2$                   & strategy for player $q$ where $\mathcal{S}_2 = \mathcal{M} \times \mathcal{A}$ and $s_2$ is often denoted using $(j,k)$                                \\
\rowcolor[HTML]{EFEFEF} 
$s \in \mathcal{S}$                        & joint strategy where $\mathcal{S} = \mathcal{S}_1 \times \mathcal{S}_2,$ and $s=(s_1,s_2)$ is often denoted as $(i,(j,k))$      \\
$p \in \Delta(\mathcal{S}_1)$             & strategy for player $p$ represented as a distribution over $\mathcal{S}_1$                                                                             \\
\rowcolor[HTML]{EFEFEF} 
$q \in \Delta(\mathcal{S}_2)$             & strategy for player $q$ represented as a distribution over $\mathcal{S}_2$                                                                             \\
$u_p(s)$                                  & payoff for player $p$ when $s$ is played, i.e.,$ u_p(s) = \mathbf{E}[F_{X_{k,j}}(X_{i,j})]$                                                            \\
\rowcolor[HTML]{EFEFEF} 
$u_q(s)$                                  & payoff for player $q$ when $s$ is played, i.e., $u_q(s) = - u_p(s)$                                                                                    \\
$u_l^-(s), u_l^+(s)$                      & confidence intervals on $u_l(s)$ for player $l \in \{p,q\}$ computed using D                                                                           \\
\hline
\end{tabular}
\caption{List of symbols used to create confidence intervals on the aggregate performance.}
\label{tbl:symbols}
\end{table}

In this section, we detail the PBP procedure for computing confidence intervals $Y^-$ and $Y^+$ on the aggregate performance $y$ and prove that they hold with high probability.  
That is, we show that for any confidence level $\delta \in (0, 0.5]$;
\begin{equation}
    \Pr(\forall i \in \mathcal{A}, \ y_i \in [Y^-_i, Y^+_i]) \ge 1 - \delta.
\end{equation}

We will first describe the PBP procedure to compute confidence intervals and then prove that they valid. 
A list of the symbols used in the construction of confidence intervals and their description are provided in Table \ref{tbl:symbols} to refresh the reader. 
The steps to compute the confidence intervals are outlined in Algorithm \ref{alg:pbp}.% and detailed below.

Recall that the aggregate performance for an algorithm $i$ is 
\begin{equation}
    y_i \coloneqq \sum_{j=1}^{|\mathcal{M}|}\sum_{k=1}^{|\mathcal{A}|} q^*_{j,k} \mathbf{E}[F_{X_{k,j}}(X_{i,j})],
\end{equation}
where $q^*$ is the equilibrium solution to the game specified in Section \ref{sec:summarization}. 
To compute valid confidence intervals $Y^-,Y^+$ on $y$ using a dataset $D$, the uncertainty of $q^*$ and mean normalized performance $z_{i,j,k} = \mathbf{E}[F_{X_{k,j}}(X_{i,j})]$. PBP accomplishes this by three primary steps. The first step is to compute confidence intervals $Z^-_{i,j,k},Z^+_{i,j,k}$ on $z_{i,j,k}$ such that
\begin{equation}
    \Pr \left (\forall (i,j,k) \in \mathcal{A} \times \mathcal{M} \times \mathcal{A},\  z_{i,j,k} \in [Z^-_{i,j,k}, Z^+_{i,j,k}] \right ) \ge 1-\delta. 
\end{equation}
The second step is to compute the uncertainty set $\mathcal{Q}$ containing all possible $q^*$ that are compatible with $Z^-$ and $Z^+$. 
The last step is to compute the smallest and largest possible aggregate performances for each algorithm over these sets, i.e.,
\begin{align}
    Y^-_i = \min_{q \in \mathcal{Q}} \sum_{j=1}^{|\mathcal{M}|}\sum_{k=1}^{|\mathcal{A}|} q_{j,k} Z^-_{i,j,k} \text{ and }\quad
    Y^+_i =  \max_{q \in \mathcal{Q}} \sum_{j=1}^{|\mathcal{M}|}\sum_{k=1}^{|\mathcal{A}|} q_{j,k} Z^+_{i,j,k}. \\
\end{align}
PBP follows this process, except in the last two steps $\mathcal{Q}$ is never explicitly constructed to improve computational efficiency. 
Intuitively, the procedure provides valid confidence intervals because all values to compute the aggregate performance depend on the normalized performance. So by guaranteeing with probability at least $1-\delta$ that the true mean normalized performances will be between $Z^-$ and $Z^+$, then so long as the the upper (lower) confidence interval computed is at least as large (small) as the maximum (minimum) of the aggregate score for any setting of $z \in [Z^-, Z^+]$, the confidence intervals will be valid.

We break the rest of this section into two subsections. The first subsection discusses constructing the confidence intervals on the mean normalized performance and proving their validity. The second subsection describes how to construct the confidence intervals on the aggregate performance proves their validity. 

\begin{algorithm}[tb]
   \caption{Performance Bound Propagation (PBP)}
   \label{alg:pbp}
\begin{algorithmic}[1]
\STATE {\bfseries Input:} dataset $D$ containing samples of performance and a confidence level $\delta \in (0,0.5])$
\STATE {\bfseries Output:} $Y^-$, $Y^+$ confidence intervals on the aggregate performance
\\
\hrulefill 
\STATE $\delta^\prime \gets \delta / (|\mathcal{A}||\mathcal{M}|)$;
\STATE $\texttt{sort\_ascending}(\{x_{i,j,t}\}_{t=1}^{T_{i,j}})$;
\STATE // Compute confidence intervals for the CDFs \label{alg:lines:dkwstart}
\FOR{$i,j \in \mathcal{A} \times \mathcal{M}$}
    \STATE $F_{X_{i,j}}^-, F_{X_{i,j}}^+ \gets  \texttt{dkw\_bound}(\{x_{i,j,t}\}_{t=1}^{T_{i,j}}, \delta^\prime)$; // computation shown in \eqref{eq:dkw} \label{alg:lines:dkwstop}
\ENDFOR 
\STATE // Compute confidence intervals on the mean normalized performance
\FOR{$i,j,k \in \mathcal{A} \times \mathcal{M} \times \mathcal{A}$}
    \STATE $Z^-_{i,j,k} \gets F^-_{X_{k,j}}(x_{i,j,T}) -  \sum_{t=0}^{T-1} \left [ F^-_{X_{k,j}}(x_{i,j,t+1})-F^-_{X_{k,j}}(x_{i,j,t}) \right ] F^+_{X_{i,j}}(x_{i,j,t})$; \\
    \STATE $Z^+_{i,j,k} \gets F^+_{X_{k,j}}(x_{i,j,T+1}) -  \sum_{t=1}^{T} \left [ F^+_{X_{k,j}}(x_{i,j,t+1})-F^+_{X_{k,j}}(x_{i,j,t}) \right ] F^-_{X_{i,j}}(x_{i,j,t})$;
\ENDFOR
\STATE // Construct game quantities
\STATE $\mathcal{S} = \mathcal{A} \times (\mathcal{M} \times \mathcal{A})$; \ strategy profile set
\STATE $\gamma \gets \frac{|\mathcal{S}|-1}{|\mathcal{S}|}$
\STATE $C^-, C^+ \gets \texttt{bound\_markov\_matrix}(Z^-, Z^+)$ as defined in \eqref{eq:cbounds}.
\STATE // Optimize aggregate performance over all possible $C \in [C^-,C^+]$
\FOR{$i \in \mathcal{A}$}
    \STATE $v \gets \texttt{find\_optimal\_valuefunction}(C^-, C^+, R_i = -Z^-_{i,\cdot,\cdot})$; // solve \eqref{eq:lpbound}
    \STATE $Y^-_i \gets \frac{(1-\gamma)}{|\mathcal{S}|} \sum_{s \in \mathcal{S}}|v(s)|$
    \STATE $v \gets \texttt{find\_optimal\_valuefunction}(C^-, C^+, R_i = Z^+_{i,\cdot,\cdot})$; // solve \eqref{eq:lpbound} 
    \STATE $Y^+_i \gets \frac{(1-\gamma)}{|\mathcal{S}|} \sum_{s \in \mathcal{S}}|v(s)|$
\ENDFOR
\end{algorithmic}
\end{algorithm}

\subsection{Confidence intervals on the normalized performance}
The normalized performance $z_{i,j,k} = \mathbf{E}[F_{X_{k,j}}(X_{i,j})]$ has two unknowns, $F_{X_{k,j}}$ and the distribution of $X_{i,j}$. 
To compute confidence intervals on $z_{i,j,k}$ for all $i,j,k$, confidence intervals are needed on the output on all distribution functions $F_{X_{i,j}}$. 
The confidence intervals on the distributions can then be combined to get confidence intervals on $z_{i,j,k}$.

To compute confidence intervals on $F_{X_{i,j}}$ we assume that $X_{i,j}$ is bounded on the interval $[a_j, b_j]$ for all $i \in \mathcal{A}$ and $j \in \mathcal{M}$. Let $\hat F_{X_{i,j}}$ be the empirical CDF with
\begin{equation}
    \hat F_{X_{i,j}}(x) \coloneqq \frac{1}{T_{i,j}} \sum_{t=1}^{T_{i,j}} \mathbf{1}_{x_{i,j,t} \le x},
\end{equation}
where $T_{i,j}$ is the number of samples of $X_{i,j}$, $x_{i,j,t}$ is the $t^\text{th}$ sample of $X_{i,j}$, and $\mathbf{1}_{A} = 1$ if event $A$ is true and $0$ otherwise.
Using the Dvoretzky--Kiefer--Wolfowitz (DKW) inequality \citep{dvoretzky1956bound} with tight constants \citep{massart1990bound}, we define $F^-_{X_{i,j}}$ and $F^+_{X_{i,j}}$ to be the lower and upper confidence intervals on $F_{X_{i,j}}$, i.e.,
\begin{equation}
\label{eq:dkw}
    \begin{aligned}
    F^+_{X_{i,j}}(x) \coloneqq &\ 
    \begin{cases}
      1 & \text{if $x \ge b$}\\
      \min(1.0, \hat F_{X_{i,j}}(x) + \epsilon)  & \text{if $a \le x < b$}\\
      0 & \text{if $x < a$} \\
    \end{cases} \\
    F^-_{X_{i,j}}(x) \coloneqq &\ 
    \begin{cases}
      1 & \text{if $x \ge b$}\\
      \max(0.0, \hat F_{X_{i,j}}(x) - \epsilon)  & \text{if $a \le x < b$}\\
      0 & \text{if $x < a$} \\
    \end{cases}
    \end{aligned}
    \quad \text{ and } \epsilon = \sqrt{\frac{\ln \frac{2}{\delta'}}{2T_{i,j}}},
\end{equation}
where $\delta' \in (0,0.5]$ is a confidence level and we use $\delta' = \delta / (|\mathcal{A}||\mathcal{M}|)$. 
By the DKW inequality with tight constants the following property is known:
\begin{prop}[DKW with tight constants confidence intervals]
\label{prop:dkw}
\begin{equation}
    \Pr \left ( \forall x \in \mathbb{R}, \ \ F_{X_{i,j}}(x) \in [F_{X_{i,j}}^-(x), F_{X_{i,j}}^+(x)] \right ) \ge 1-\delta'.
\end{equation}
\end{prop}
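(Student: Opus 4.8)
The plan is to obtain the claimed confidence band as a direct consequence of the Dvoretzky--Kiefer--Wolfowitz (DKW) inequality with Massart's tight constant \citep{dvoretzky1956bound,massart1990bound}, which I treat as an established result rather than reprove. The samples $x_{i,j,1},\dots,x_{i,j,T_{i,j}}$ are i.i.d.\ draws of $X_{i,j}$ (each trial uses an independent random seed), so DKW applies and states that for any $\epsilon>0$,
\[
\Pr\!\left(\sup_{x\in\mathbb{R}} \bigl|\hat F_{X_{i,j}}(x) - F_{X_{i,j}}(x)\bigr| > \epsilon \right) \le 2\,e^{-2 T_{i,j}\epsilon^2}.
\]
The remaining work is purely (i) to show the event in the proposition is implied by the sup-norm event $\{\sup_x |\hat F_{X_{i,j}}(x) - F_{X_{i,j}}(x)| \le \epsilon\}$, and (ii) to pick $\epsilon$ so the right-hand side equals $\delta'$.

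First I would dispatch the boundary regions. Because $X_{i,j}$ is supported on $[a_j,b_j]$, the true CDF obeys $F_{X_{i,j}}(x)=0$ for $x<a_j$ and $F_{X_{i,j}}(x)=1$ for $x\ge b_j$, which coincide exactly with both band endpoints on those regions; hence containment there is deterministic and costs no probability. On $a_j\le x<b_j$ the bands are $\max(0,\hat F_{X_{i,j}}(x)-\epsilon)$ and $\min(1,\hat F_{X_{i,j}}(x)+\epsilon)$, and since $F_{X_{i,j}}(x)\in[0,1]$ the truncations to $[0,1]$ are vacuous: $F_{X_{i,j}}(x)\ge\max(0,\hat F_{X_{i,j}}(x)-\epsilon)$ is equivalent to $F_{X_{i,j}}(x)\ge \hat F_{X_{i,j}}(x)-\epsilon$, and likewise for the upper side. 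Thus on this region containment is exactly $|\hat F_{X_{i,j}}(x)-F_{X_{i,j}}(x)|\le\epsilon$. Combining the three regions gives the desired implication: whenever the sup-norm event holds, $F_{X_{i,j}}(x)\in[F^-_{X_{i,j}}(x),F^+_{X_{i,j}}(x)]$ for \emph{all} $x$.

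Finally I would substitute the stated $\epsilon=\sqrt{\ln(2/\delta')/(2T_{i,j})}$, so that $2\,e^{-2T_{i,j}\epsilon^2}=2\,e^{-\ln(2/\delta')}=\delta'$. Taking complements in the DKW bound yields $\Pr(\sup_x|\hat F_{X_{i,j}}-F_{X_{i,j}}|\le\epsilon)\ge 1-\delta'$, and the implication from the previous paragraph then gives $\Pr(\forall x,\ F_{X_{i,j}}(x)\in[F^-_{X_{i,j}}(x),F^+_{X_{i,j}}(x)])\ge 1-\delta'$. The one genuinely difficult ingredient is the optimal constant $2$ in the DKW inequality (Massart's theorem), which is what makes the bands as tight as stated; because the paper imports this as known, everything left is the boundary-clipping check and the algebraic choice of $\epsilon$, both routine.
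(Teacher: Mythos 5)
Your proof is correct and takes essentially the same approach as the paper, whose proof of this property consists entirely of citing \citet{dvoretzky1956bound} and \citet{massart1990bound}. You simply make explicit the routine details the paper leaves to those references---the choice $\epsilon=\sqrt{\ln(2/\delta')/(2T_{i,j})}$ so that $2e^{-2T_{i,j}\epsilon^2}=\delta'$, the vacuity of the clipping to $[0,1]$, and the deterministic containment outside $[a_j,b_j]$---all of which check out.
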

\begin{proof}
See the works of \citet{dvoretzky1956bound} and \citet{massart1990bound}.
\end{proof}

Further, by the union bound we have that 
\begin{align}
\label{eq:unioncdf}
    \Pr \left (\forall i \in \mathcal{A}, \forall j \in \mathcal{M}, \forall x \in \mathbb{R}, \ \ F_{X_{i,j}}(x) \in [F_{X_{i,j}}^-(x), F_{X_{i,j}}^+(x)] \right ) \ge 1-\delta.
\end{align}

To construct confidence intervals on the mean normalized performance, we will use Anderson's inequality \citep{anderson1969mean}. Let $X$ be a bounded random variable on $[a,b]$, with sorted samples $x_1 \le x_2 \le \dots \le x_T$, $x_0=a$, and $x_{T+1}=b$. Let $g \colon \mathbb{R} \to \mathbb{R}$ be a monotonically increasing function. Anderson's inequality specifies for a confidence level $\delta \in (0, 0.5]$ the following high confidence bounds on $\mathbf{E}[g(X)]$: 
\begin{equation}
\begin{aligned}
    \mathbf{E}[g(X)] \ge & \ g(x_{T}) &-  \sum_{t=0}^{T-1} \left [ g(x_{t+1})-g(x_{t}) \right ] F^+_{X}(x_{t}) \\
    \mathbf{E}[g(X)] \le & \ g(x_{T+1}) &-  \sum_{t=1}^{T} \left [ g(x_{t+1})-g(x_{t}) \right ] F^-_{X}(x_{t}), \\
\end{aligned}
\end{equation}
where $F^{+/-}_X$ uses the DKW inequality with tight constants and as defined in \eqref{eq:dkw}.

Anderson's inequality can be used to bound the mean normalized performance since $F_{X_{k,j}}$ is a monotonically increasing function and $\delta \in (0,0.5]$. Since $F_{X_{k,j}}$ is unknown, we replace $g$ in Anderson's inequality with $F^-_{X_{k,j}}$ for the lower bound and $F^+_{X_{k,j}}$ for the upper bound. This gives the following confidence intervals for $z_{i,j,k}$:
\begin{equation}
\begin{aligned}
    Z^-_{i,j,k} = & \ F^-_{X_{k,j}}(x_{i,j,T}) &-  \sum_{t=0}^{T-1} \left [ F^-_{X_{k,j}}(x_{i,j,t+1})-F^-_{X_{k,j}}(x_{i,j,t}) \right ] F^+_{X_{i,j}}(x_{i,j,t}) \\
    Z^+_{i,j,k} = & \ F^+_{X_{k,j}}(x_{i,j,T+1}) &-  \sum_{t=1}^{T} \left [ F^+_{X_{k,j}}(x_{i,j,t+1})-F^+_{X_{k,j}}(x_{i,j,t}) \right ] F^-_{X_{i,j}}(x_{i,j,t}), \label{eq:normbounds} \\
\end{aligned}
\end{equation}
where $T = T_{i,j}$, $x_{i,j,0} = a_j$, and $x_{i,j,T+1} = b_j$.
We now prove the following lemma:
\begin{lemma}
\label{lem:meannormalized}
If $Z^-$ and $Z^+$ are computed by \eqref{eq:normbounds}, then: 
\begin{equation}
\Pr \left (\forall i,k \in \mathcal{A}, \forall j \in \mathcal{M}, \ \ z_{i,j,k} \in [Z^-_{i,j,k}, Z^+_{i,j,k}] \right ) \ge 1-\delta.
\end{equation}
\end{lemma}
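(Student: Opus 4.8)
The plan is to condition the entire argument on a single high-probability event and then reason deterministically. I would define the global event
\[ E \coloneqq \left\{ \forall i \in \mathcal{A}, \forall j \in \mathcal{M}, \forall x \in \mathbb{R}, \ F_{X_{i,j}}(x) \in [F^-_{X_{i,j}}(x), F^+_{X_{i,j}}(x)] \right\}, \]
which by the union-bounded DKW statement in \eqref{eq:unioncdf} satisfies $\Pr(E) \ge 1-\delta$. It then suffices to show that $E$ \emph{deterministically} implies $z_{i,j,k} \in [Z^-_{i,j,k}, Z^+_{i,j,k}]$ simultaneously for every triple $(i,j,k)$, because then $\Pr(\forall i,j,k, \ z_{i,j,k} \in [Z^-_{i,j,k}, Z^+_{i,j,k}]) \ge \Pr(E) \ge 1-\delta$.

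First I would apply Anderson's inequality to $X = X_{i,j}$ with the monotone function $g = F_{X_{k,j}}$. Since the bounds in Anderson's inequality are deterministic consequences of the DKW band $F^{+/-}_{X_{i,j}}$ holding for $X_{i,j}$, on $E$ we get
\[ z_{i,j,k} = \mathbf{E}[F_{X_{k,j}}(X_{i,j})] \ge F_{X_{k,j}}(x_{i,j,T}) - \sum_{t=0}^{T-1}\left[F_{X_{k,j}}(x_{i,j,t+1}) - F_{X_{k,j}}(x_{i,j,t})\right] F^+_{X_{i,j}}(x_{i,j,t}), \]
together with the symmetric upper bound using $F^-_{X_{i,j}}$. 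The difficulty is that the right-hand side still contains the \emph{unknown} true CDF $F_{X_{k,j}}$, whereas $Z^-_{i,j,k}$ is defined using $F^-_{X_{k,j}}$.

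The key observation I would establish is that the right-hand side of the lower Anderson bound, viewed as a functional of the monotone function $g$, is a convex combination of the values $g(x_{i,j,0}), \dots, g(x_{i,j,T})$ with nonnegative weights summing to one; an Abel summation rewrites it as $\sum_t (c_t - c_{t-1}) g(x_{i,j,t})$ with $c_t = F^+_{X_{i,j}}(x_{i,j,t})$ nondecreasing. Hence the functional is monotonically nondecreasing in $g$ at each sample point, so replacing $g = F_{X_{k,j}}$ by the pointwise-smaller $F^-_{X_{k,j}}$ (valid on $E$) only decreases it, giving $z_{i,j,k} \ge Z^-_{i,j,k}$. An identical argument — the upper Anderson functional is likewise nondecreasing in $g$, and $F_{X_{k,j}}$ is replaced by the pointwise-larger $F^+_{X_{k,j}}$ — yields $z_{i,j,k} \le Z^+_{i,j,k}$.

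The main obstacle, and the step that makes the probability budget work out, is recognizing that \emph{both} uses of the DKW bands — Anderson's inequality on $X_{i,j}$ and the substitution of $F_{X_{k,j}}$ by its confidence band — are deterministic consequences of the \emph{same} event $E$. The bound for triple $(i,j,k)$ needs only the bands for the pairs $(i,j)$ and $(k,j)$, and as $(i,j,k)$ ranges over $\mathcal{A} \times \mathcal{M} \times \mathcal{A}$ these pairs are exactly $\mathcal{A} \times \mathcal{M}$, already covered by $E$. Thus no probability is spent beyond the single $\delta$ charged in \eqref{eq:unioncdf}, and the naive approach that charges probability twice (effectively halving $\delta'$) is avoided. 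The only remaining routine checks are the Abel-summation rearrangement and verifying the weights are nonnegative and sum to one, which I would not belabor.
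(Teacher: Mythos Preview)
Your proposal is correct and follows essentially the same strategy as the paper: condition on the single event that all DKW bands hold simultaneously (probability $\ge 1-\delta$ by \eqref{eq:unioncdf}) and then argue deterministically that every $z_{i,j,k}$ lands in $[Z^-_{i,j,k},Z^+_{i,j,k}]$. The only cosmetic difference is the order of the two substitutions: the paper applies Anderson's inequality directly with $g=F^{\pm}_{X_{k,j}}$ to bound $\mathbf{E}[F^{\pm}_{X_{k,j}}(X_{i,j})]$ and then invokes $F^-_{X_{k,j}}\le F_{X_{k,j}}\le F^+_{X_{k,j}}$ \emph{inside the expectation}, whereas you apply Anderson with the true $g=F_{X_{k,j}}$ and then substitute $F^{\pm}_{X_{k,j}}$ \emph{inside the Anderson formula} via your Abel-summation monotonicity argument. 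Both routes are valid; your version is arguably cleaner because it makes explicit why the substitution preserves the inequality, a step the paper leaves implicit.
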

\begin{proof}
By Anderson's inequality we know that $Z^+_{i,j,k}$ is an high confidence upper bound on $\mathbf{E}[F^+_{X_{k,j}}(X_{i,j})]$ and similarly $Z^-_{i,j,k}$ is a high confidence lower bound on $\mathbf{E}[F^+_{X_{k,j}}(X_{i,j})]$, i.e., 

\begin{align}
    &\ \Pr \left (\mathbf{E}[F^-_{X_{k,j}}(X_{i,j})] \ge Z^-_{i,j,k} \right ) \ge 1-\delta'/2 \\
    &\ \Pr \left (\mathbf{E}[F^+_{X_{k,j}}(X_{i,j})] \le Z^+_{i,j,k} \right ) \ge 1-\delta'/2.
\end{align}

By Property \ref{prop:dkw} we know that $\Pr \left ( \forall x \in \mathbb{R}, \ \ F_{X_{k,j}}(x) \in [F_{X_{k,j}}^-(x), F_{X_{k,j}}^+(x)] \right ) \ge 1-\delta'$, thus 
\begin{equation}
\Pr \left (\forall i,k \in \mathcal{A}, \forall j \in \mathcal{M}, \ \ z_{i,j,k} \in [Z^-_{i,j,k}, Z^+_{i,j,k}] \right ) \ge 1-2\delta',
\end{equation}
where $2\delta'$ comes from combining the failure rates of confidence intervals on the CDFs $F_{X_{i,j}}$ and $F_{X_{k,j}}$. The confidence intervals on the mean normalized performances can only fail if the confidence intervals on CDFs fail. As stated in \eqref{eq:unioncdf}, all confidence intervals on the CDFs contain the true CDFs with probability at least $1-\delta$. Thus, all mean normalized performances hold with probability at least $1 - \delta$.
\end{proof}

The confidence intervals given by $Z^-$ and $Z^+$ are guaranteed to hold for $T \ge 1$, and $\delta \in (0,0.5]$, but are often conservative requiring a large number samples to identify a statistically meaningful result. 
So we empirically test alternatives that have either stricter assumptions or weaker theoretical justification.

\subsection{Confidence intervals on the aggregate performance}
In this section we will provide details on how to compute the confidence intervals on the aggregate performance using the confidence intervals on the mean normalized performance and then prove that they hold with high confidence. To construct confidence intervals and prove their validity we will make the following steps. First, we show that for a fixed weighting $q$ that valid confidence intervals can be computed directly by using interval arithmetic. Second, we describe how to characterize the uncertainty of the game. Third, we make a connection between aggregate performance using the equilibrium solution $q^*$ and the optimal average reward for a Markov decision process. Lastly we describe an optimization procedure for computing the optimal average reward, which corresponds to finding the lower and upper confidence intervals.

Before discussing how to bound the aggregate performance using the game theoretic solution, consider the case when weights $q$ can be any probability distribution over algorithms and environments chosen before the experiment begins. 
Let weights $q \in [0,1]^{|\mathcal{A}| \times |\mathcal{M}|}$, such that $\sum_{j \in \mathcal{M}} \sum_{k \in \mathcal{A}} q_{j,k}=1$.
Let $\tilde y_i = \sum_{j \in \mathcal{M}} \sum_{k \in \mathcal{A}} q_{j,k} z_{i,j,k}$ be the aggregate performance for an algorithm $i \in \mathcal{A}$.
The corresponding confidence intervals for $\tilde y_i$ are $\tilde Y^-_i = \sum_{j \in \mathcal{M}} \sum_{k \in \mathcal{A}} q_{j,k} Z^-_{i,j,k}$ and $\tilde Y^+_i = \sum_{j \in \mathcal{M}} \sum_{k \in \mathcal{A}} q_{j,k} Z^+_{i,j,k}$.
\begin{lemma}
\label{lem:aggsum}
If weights $q$ are independent of the data $D$, then: 
\begin{equation}
 \Pr \left (\forall i \in \mathcal{A}, \  \tilde y_i \in [\tilde Y_i^-, \tilde Y_i^+] \right ) \ge 1 - \delta
 \end{equation}
\end{lemma}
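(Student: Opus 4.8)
The plan is to reduce the claim to the joint confidence statement already established for the mean normalized performances in Lemma \ref{lem:meannormalized}, and then to propagate that single high-probability event through a nonnegative linear combination. Concretely, I would first define the \emph{good event}
\begin{equation}
E \coloneqq \left\{ \forall i,k \in \mathcal{A},\ \forall j \in \mathcal{M},\ z_{i,j,k} \in [Z^-_{i,j,k}, Z^+_{i,j,k}] \right\},
\end{equation}
and invoke Lemma \ref{lem:meannormalized} to obtain $\Pr(E) \ge 1-\delta$. The entire argument then takes place conditionally on $E$, so that no further union bound or splitting of the failure probability $\delta$ is required: the whole budget has already been spent in bounding the CDFs.

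The key step is a deterministic, pointwise interval-arithmetic argument on $E$. Fix any algorithm $i \in \mathcal{A}$. On $E$ we have $Z^-_{i,j,k} \le z_{i,j,k} \le Z^+_{i,j,k}$ for every pair $(j,k)$. Because each weight satisfies $q_{j,k} \ge 0$ and, crucially, is independent of the data $D$ (hence a fixed constant rather than a random quantity), multiplying each two-sided inequality by $q_{j,k}$ preserves its direction; summing over all $(j,k) \in \mathcal{M} \times \mathcal{A}$ then yields
\begin{equation}
\sum_{j,k} q_{j,k} Z^-_{i,j,k} \ \le\ \sum_{j,k} q_{j,k} z_{i,j,k} \ \le\ \sum_{j,k} q_{j,k} Z^+_{i,j,k},
\end{equation}
which is exactly $\tilde Y_i^- \le \tilde y_i \le \tilde Y_i^+$. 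Since $i$ was arbitrary, all $|\mathcal{A}|$ containments hold simultaneously on $E$.

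Finally, I would combine the two observations: the event $\{\forall i \in \mathcal{A},\ \tilde y_i \in [\tilde Y_i^-, \tilde Y_i^+]\}$ contains $E$, so its probability is at least $\Pr(E) \ge 1-\delta$, giving the lemma. I do not anticipate a genuine obstacle here, since the result is essentially monotonicity of a nonnegatively weighted sum. The one point that deserves care, and the reason the hypothesis that $q$ is data-independent is stated explicitly, is that this independence is what lets me treat $\tilde y_i$ as a fixed target and the weighted sums $\tilde Y_i^{\pm}$ as deterministic functions of the random intervals $Z^{\pm}$; allowing $q$ to depend on $D$ (as the equilibrium weighting $q^*$ does) would break this clean accounting, which is precisely why the subsequent sections must optimize over an uncertainty set $\mathcal{Q}$ of admissible weightings rather than reuse this lemma directly.
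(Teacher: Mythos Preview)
Your proposal is correct and follows essentially the same approach as the paper: invoke Lemma~\ref{lem:meannormalized} to get the joint event $E$ with probability at least $1-\delta$, then on $E$ use interval arithmetic (monotonicity of a nonnegatively weighted sum) to obtain all the containments $\tilde y_i \in [\tilde Y_i^-,\tilde Y_i^+]$ simultaneously. Your write-up is in fact more explicit than the paper's, which simply asserts that ``interval arithmetic can be used'' and then writes $\Pr(\forall i,\ \tilde y_i \in [\tilde Y_i^-,\tilde Y_i^+]) \ge 1 - \Pr(\bigcup z_{i,j,k}\notin[Z^-_{i,j,k},Z^+_{i,j,k}]) \ge 1-\delta$; your closing remark on why data-independence of $q$ is needed (and why $q^*$ requires the later optimization over $\mathcal{Q}$) is also apt.
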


\begin{proof}
Applying the result of Lemma \ref{lem:meannormalized}, all confidence intervals produced by $Z^-_{i,j,k}$ and $Z^+_{i,j,k}$ contain $z_{i,j,k}$ with probability $1-\delta$. So interval arithmetic can be used to compute confidence intervals on the aggregate performance without changing the probability of failure, i.e., $\tilde Y^-_i = \sum_{j \in \mathcal{M}} \sum_{k \in \mathcal{A}} q_{j,k} Z^-_{i,j,k}$ and $\tilde Y^+_i = \sum_{j \in \mathcal{M}} \sum_{k \in \mathcal{A}} q_{j,k} Z^+_{i,j,k}$ for each algorithm $i \in \mathcal{A}$.
With these intervals then 
\begin{align}
\Pr \left (\forall i \in \mathcal{A}, \  \tilde y_i \in [\tilde Y_i^-, \tilde Y_i^+] \right ) \ge &\ 1 - \Pr \left (\bigcup_{i,k \in \mathcal{A},j \in \mathcal{M}} z_{i,j,k} \notin  [Z^-_{i,j,k}, Z^+_{i,j,k}] \right ) \\
\ge &\ 1 - \delta.
\end{align} 
\end{proof}
This method of aggregating performance highlights how the uncertainty of normalized scores propagates to the confidence intervals of the aggregate performance for a fixed weighting. Next we will show how to compute confidence intervals on $y$ when using the dynamic weighting produce by the equilibrium solution to the two player game.

Instead of considering all possible equilibrium solutions $q^*$, recall that $y_i$ depends on the Markov matrix $C$ as shown in \eqref{eq:aggalgivalue}. To construct confidence intervals on $y_i$ the uncertainty in creating the matrix $C$ as defined in \eqref{eq:cmat} needs to be considered \citep{rowland2019alphabounds}. 
The definition of $C$ assumes certainty of payoffs of each strategy, but the empirical payoffs have uncertainty corresponding to $Z^-$ and $Z^+$, i.e., for $s=(i,(j,k))$, $u^-_p(s)=Z^-_{i,j,k}$, $u^+_p(s)=Z^+_{i,j,k}$, $u^-_q(s)=-Z^+_{i,j,k}$, $u^+_q(s)=-Z^-{i,j,k}$. As a result, when the payoff confidence intervals overlap for two strategies $s$ and $s'$ this creates uncertainty in $C$. We define $C^-,C^+ \in [0,1]^{|\mathcal{S}| \times |\mathcal{S}|}$ as the lower and upper confidence intervals on $C$ with entries 
\begin{equation}
\label{eq:cbounds}
    \begin{aligned}
    C^-_{s,s'}, C^+_{s,s'} \coloneqq & 
    \begin{cases}
      (\eta,\eta) & \text{if $u^-_l(s') > u^+_l(s)$}\\
      (0,0) & \text{if $u^-_l(s) > u^+_l(s')$}\\
      (\frac{\eta}{n},\frac{\eta}{n}) & \text{if $u^{+/-}_l(s) = u^{+/-}_l(s')$}\\
      (0,\eta) & \text{otherwise}
    \end{cases} \quad \forall s' \in \mathcal{S} \setminus \{s\} \\
        C^-_{s,s}, C^+_{s,s} \coloneqq &\ \left (1 - \sum_{s' \ne s}C^+_{s,s'},\  1 - \sum_{s' \ne s}C^-_{s,s'} \right )
    \end{aligned}
\end{equation}

To get the bounds on the aggregate performance over all possible $C$ in these intervals the uncertainty of the stationary distribution on $\tilde C = \gamma C + (1-\gamma)(1/|\mathcal{S}|)$ has to be consider. 
This can be accomplished by first computing the minimum and maximum values of the stationary distribution for each strategy \citep{rowland2019alphabounds} and then finding the minimum and maximum aggregate performances for all possible stationary distributions in this limits.
However, individually computing these two quantities leads to looser bounds than directly estimating the minimum and maximum aggregate performance over all possible $C$ because it ignores the correlations in the confidence intervals of the stationary distribution.
To compute the minimum and maximum aggregate performance over all possible $C$, we need to make a connection between the average performance using the stationary distribution of $\tilde C$ and the average performance before termination on $C$.

Let $d$ be a stationary distribution over strategy profiles $\mathcal{S}$ induce by the Markov matrix $\tilde C = \gamma C + (1-\gamma)(1/|\mathcal{S}|)$. 
Let $q$ be the distribution of strategies for player $q$ contained in $d$. 
\begin{lemma}
\label{lem:disttovalue}
The following are equivalent
\begin{align}
    y_i = &\ \sum_{(j,k) \in \mathcal{S}_2} q_{j,k}z_{i,j,k} \\
    y_i = &\ \frac{1-\gamma}{|\mathcal{S}|}\sum_{s \in \mathcal{S}} v(s),
\end{align}
where $v = (I - \gamma C)^{-1}R_i$, $R_i \in \mathbb{R}^{|\mathcal{S}|}$ such that $R_i(s) = z_{i,j,k}$ for $s=(\cdot, (j,k))$.
\end{lemma}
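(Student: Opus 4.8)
The plan is to recognize the second expression as the classical identity relating the stationary distribution of a damped (PageRank-style) Markov chain to a uniformly averaged discounted value function, and then to reduce the first expression to exactly the same quantity. Throughout I would treat $d$ as a row vector indexed by $s \in \mathcal{S}$ and $R_i, v$ as column vectors, writing $\mathbf{1}$ for the all-ones vector.

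First I would characterize $d$ in closed form. Since $d$ is the stationary distribution of $\tilde C = \gamma C + (1-\gamma)\frac{1}{|\mathcal{S}|}\mathbf{1}\mathbf{1}^\top$, it satisfies $d = d\tilde C$. Expanding and using that $d$ is a probability distribution so that $d\mathbf{1}=1$, the damping term collapses to $\frac{1-\gamma}{|\mathcal{S}|}\mathbf{1}^\top$, giving $d(I-\gamma C) = \frac{1-\gamma}{|\mathcal{S}|}\mathbf{1}^\top$. Because $\gamma \in (0,1)$ and $C$ is row-stochastic, the spectral radius of $\gamma C$ is at most $\gamma < 1$, so $I-\gamma C$ is invertible (with convergent Neumann series $\sum_{k\ge 0}(\gamma C)^k$), and hence $d = \frac{1-\gamma}{|\mathcal{S}|}\mathbf{1}^\top(I-\gamma C)^{-1}$. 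As a consistency check one can verify $d\mathbf{1}=1$ using $C\mathbf{1}=\mathbf{1}$, which forces $(I-\gamma C)^{-1}\mathbf{1} = \frac{1}{1-\gamma}\mathbf{1}$.

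Next I would rewrite the first expression as an inner product against $d$. Since $q_{j,k}$ is by assumption the marginal of $d$ over player $q$'s strategy, $q_{j,k} = \sum_{s_1 \in \mathcal{S}_1} d_{(s_1,(j,k))}$, and since $R_i(s)=z_{i,j,k}$ depends only on $s_2=(j,k)$ and not on $s_1$, I can fold the sum over $s_1$ inside to obtain $\sum_{(j,k)} q_{j,k} z_{i,j,k} = \sum_{s\in\mathcal{S}} d_s R_i(s) = d R_i$. Substituting the closed form of $d$ then yields $y_i = \frac{1-\gamma}{|\mathcal{S}|}\mathbf{1}^\top(I-\gamma C)^{-1}R_i = \frac{1-\gamma}{|\mathcal{S}|}\mathbf{1}^\top v = \frac{1-\gamma}{|\mathcal{S}|}\sum_{s\in\mathcal{S}} v(s)$, which is precisely the second expression.

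The steps above are mostly routine matrix manipulation; the two points needing care are the invertibility of $I-\gamma C$ (handled by the spectral-radius argument) and the bookkeeping that identifies $q_{j,k}$ with the marginal of the stationary distribution while using the $s_2$-only dependence of $R_i$. I expect the main, though minor, obstacle to be making this marginalization step fully rigorous, namely verifying that folding the sum over $s_1$ exactly reproduces $d R_i$, rather than anything analytically deep, since the underlying PageRank / discounted-value identity is standard.
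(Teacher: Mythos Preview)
Your proof is correct. Both you and the paper first reduce the weighted sum over $\mathcal{S}_2$ to $y_i = d R_i$ by marginalizing out $s_1$ and using that $R_i(s)$ depends only on $s_2$. Where the two diverge is in how the PageRank-type identity is then established: the paper invokes the gain--bias equation for average-reward Markov chains (Puterman 1994, \S8.2.12), namely $v + y_i\mathbf{1} = R_i + \tilde C v$, and verifies that $v = (I-\gamma C)^{-1}R_i$ satisfies it after expanding $\tilde C$; you instead solve $d = d\tilde C$ directly for $d = \tfrac{1-\gamma}{|\mathcal{S}|}\mathbf{1}^\top(I-\gamma C)^{-1}$ via the Neumann-series invertibility argument and then pair with $R_i$. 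Your route is more elementary and fully self-contained, needing no external reference, while the paper's route situates the result within standard average-reward MDP theory. Either way the argument is short and the marginalization step you flagged as the main care point is exactly the one the paper also spells out.
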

\begin{proof}
We know that $q_{j,k} = \sum_{i \in \mathcal{S}_1} d(i,(j,k))$. This implies $y_i = \sum_{(j,k) \in \mathcal{S}_2} q_{j,k}z_{i,j,k} = \sum_{s \in \mathcal{S}} d(s) R_i(s)$. Applying $8.2.12$ of \citet{puterman1994mdp}, we have the relation
\begin{align}
v + y_i = &\ R_i + \tilde C v \\
v + y_i = &\ R_i + \gamma C v + \frac{(1-\gamma)}{|S|}\mathbf{1}^\top v,
\end{align}
where here $\mathbf{1} \in \mathbb{R}^{|\mathcal{S}|}$ is a vector of all ones. Then for $v = (I-\gamma C)^{-1}R_i$, $y_i=\frac{(1-\gamma)}{|S|}\mathbf{1}^\top v$.
\end{proof}

Finally, using techniques developed by \citet{fercoq2013pagerankopt}, the lower and upper bounds of the aggregate performance $y_i$ can be bound by solving the following optimization problem:
\begin{equation}
\label{eq:lpbound}
\begin{aligned}
\min_{v} \quad & \sum_{s \in \mathcal{S}} v(s) \\
\textrm{s.t.} \quad & v(s) \ge R_i(s) + \gamma \sum_{s'\in \mathcal{S}} C_{s,s'} v(s')\\
\quad & C^- \le C \le C^+ \\
& \sum_{s' \in \mathcal{S}} C_{s,s'} = 1 \quad \forall s \in \mathcal{S},
\end{aligned}
\end{equation}
where $C$ is a free variable, $R_i(s) = -Z^-_{i,j,k}$, and $R_i(s) = Z^+_{i,j,k}$ are used to obtain the lower and upper bounds on $y_i$, respectively. Both bounds are computed as $Y_i^{+/-}=(1-\gamma)(1/|\mathcal{S}|) \sum_{s \in \mathcal{S}}|v(s)|$ using the respective solutions $v$. The absolution value of $v$ is taken to account for the negativity of $R_i = -Z^-$. We compute the solution $v$ in polynomial time using policy iteration based approach similar to \citet{fercoq2013pagerankopt}, but modify their algorithm to fit our matrix $C$. We detail our exact method in Appendix \ref{app:valueiteration}.

Now to prove the main result we will use the previous lemmas in connection with PBP. 

\pbpthm*
\begin{proof}
From Lemma \ref{lem:meannormalized} we know that $Z^-$ and $Z^+$ are valid $1-\delta$ confidence intervals on $z$. Thus, applying Lemma \ref{lem:aggsum} we know that a valid $1-\delta$ confidence intervals can be computed by a weighted sum of lower and upper bounds $Z^-$ and $Z^+$, for any joint probability distribution $q$ over environments and algorithms. Through Lemma \ref{lem:disttovalue} this is equivalent to assuming fixed Markov matrix $C$. 
The true matrix $C$ is unknown, so the minimum and maximum intervals need to be found over a set $\mathcal{C}$ that contains all transition matrices that are compatible with $Z^-$ and $Z^+$. 
Let $\mathcal{C} = \prod_{s \in \mathcal{S}} \mathcal{C}_s$, where $\mathcal{C}_s$ is the polytope of transition probabilities starting from strategy profile $s \in \mathcal{S}$, i.e., 
$$\mathcal{C}_s = \{C_{s,\cdot} \colon \forall {s' \in \mathcal{S}}, \  C_{s,s'} \in [C^-_{s,s'}, C^+_{s,s'}], \sum_{s' \in \mathcal{S}} C_{s,s'}=1 \}.$$
The minimum and maximum aggregate values $Y^-$ and $Y^+$ computed over $\mathcal{C}$ can be found in polynomial time using linear programming, value iteration, or policy iteration \citep{fercoq2013pagerankopt}.
PBP finds the minimum and maximum confidence intervals over all $C \in \mathcal{C}$, thus, they represent valid $1-\delta$ confidence intervals for each algorithm. 
\end{proof}

\section{Policy Iteration for Bounding the Aggregate Performance}
\label{app:valueiteration}

This section we detail our approach for optimizing the aggregate performance over the uncertainty of the Markov matrix $C$ to compute confidence intervals. 
Recall that the upper and lower high-confidence bounds on the aggregate performance for algorithm $i$ can be found by solving the optimization problem in \ref{eq:lpbound}.
Alternatively one can use a Dynamic Programming approach either using value iteration or policy iteration for more efficient optimization \citep{fercoq2013pagerankopt}. 
Using value iteration has better scaling to the size of the state space in the optimization problem than policy iteration. 
However, we found that in small and moderate sized problems policy iteration was sufficient and used it in our experiments.

Our method is similar to that of \citet[Algorithm 1]{fercoq2013pagerankopt}, except that we use policy iteration instead of value iteration and a modification to the dynamic programming operator since the transition probabilities in our problem do not depend on the number of out going edges.
Algorithm \ref{alg:policyiteration} shows pseudocode of the policy iteration to find the the optimal value function $v^*$ through the modification of the matrix $C \in \mathcal{C}$. 
The algorithm takes as input lower and upper bounds $C^-, C^+ \in \mathbb{R}^{|\mathcal{S}| \times | \mathcal{S}|}$ on $C$, a reward function $R \in \mathbb{R}^{|\mathcal{S}|}$, a discount factor $\gamma \in (0,1)$, a tolerance on the distance to optimal, $tol \in \mathbb{R}$, and the maximum number of iterations $max\_itrs \in \mathbb{N}^+$. We set $\gamma = (1-|\mathcal{S}|)/|\mathcal{S}|$ and $tol=1 \times 10^{-7}$, $max\_itrs = 400$ as default parameters. When finding the upper confidence interval, $y^+_i$, for algorithm $i$, $R$ is a vector such that $R(s) = Z^+_{i,j,k}$ for all $s=(\cdot,(j,k)) \in \mathcal{S}$. Similarly, when finding $y_i^-$, $R$ is such that $R(s) = -Z^-_{i,j,k}$.

\begin{algorithm}[tb]
   \caption{Policy Iteration for aggregate performance optimization}
   \label{alg:policyiteration}
\begin{algorithmic}
    \STATE {\bfseries Input:} $C^-, C^+ \in \mathbb{R}^{|\mathcal{S}|\times |\mathcal{S}|}$, $R \in \mathbb{R}^{|\mathcal{S}|}$, $\gamma \in (0,1) = (1-|\mathcal{S}|)/|\mathcal{S}|$, $tol \in \mathbb{R} = 1 \times 10^{-7}$, $max\_itrs \in \mathbb{N}^+ = 400$
    \STATE Initialize: $C \gets C^-$, $v \gets \mathbf{0} \in \mathbb{R}^{|\mathcal{S}|}$
    \STATE $changed \gets \texttt{True}$
    \STATE $iteration \gets 0$
    \WHILE{changed}
        \STATE $changed \gets \texttt{False}$
        \STATE $iteration \gets iteration + 1$
        \IF{$iteration \ge max\_itrs$}
        \STATE \texttt{break}
        \ENDIF
        \FOR{$s \in \mathcal{S}$}\STATE $C_s' \gets \texttt{update\_transition\_row}(C^-_s,C^+_s,R(s),v,\gamma)$
            \IF{$||C_s - C_s'||_\infty \ge 1 \times 10^{-8}$}
                \STATE $changed \gets \texttt{True}$
                \STATE $C_s \gets C_s'$
            \ENDIF
        \ENDFOR
        \STATE $v' \gets (I - \gamma C)^{-1}R$
        \STATE $\epsilon \gets ||v-v'||_\infty$  // max change in value function
        \STATE $\epsilon_{v^*} \gets (2 \epsilon \gamma) / (1-\gamma)$ // error bound on distance to $v^*$
        \STATE $\epsilon_{aggregate} \gets (1-\gamma) \epsilon_{v^*}$  // bound on the maximum error to the aggregate performance
        \IF{$\epsilon_{aggregate} < tol$}
            \STATE $changed \gets \texttt{False}$
        \ENDIF
        \STATE $v \gets v'$
    \ENDWHILE
    \STATE {\bfseries Return:} $(1-\gamma) \texttt{mean}(|v|)$
\end{algorithmic}
\end{algorithm}

Algorithm \ref{alg:policyiteration} has two main steps. First, greedily optimize $C$ with respect to the current value function $v$, i.e., for all $s \in \mathcal{S}$ 
\begin{equation}
    C_s = \argmax_{C_s \in \mathcal{C}_s} R(s) + \gamma C_s v.
\end{equation}
We provide pseudocode for this step in Algorithm \ref{alg:policyupdate}. 
The second step is a value function update, which we compute exactly by solving the system of linear equations 
%\begin{equation}
$    v = R + \gamma C v$, 
%\end{equation}
by setting $v = (I-\gamma C)^{-1}R$.
These steps repeat until $C$ stops changing or a bound on the maximum absolute error in aggregate performance is below some threshold. 
The confidence interval on the aggregate performance is then returned as $(1-\gamma)/|\mathcal{S}| \sum_{s \in \mathcal{S}} |v(s)|$. 
Since $C$ is sparse optimization to the code can be made to drastically speed up computation when $\mathcal{S}$ is large. We make some of these modifications in our implementation.

\begin{algorithm}[tb]
   \caption{\texttt{update\_transition\_row} procedure for updating $C_s$}
   \label{alg:policyupdate}
\begin{algorithmic}
   \STATE {\bfseries Input:} $C^-_s, C^+_s \in \mathbb{R}^{|\mathcal{S}|}$, $r \in \mathbb{R}$, $v \in \mathbb{R}^{|\mathcal{S}|}$, $\gamma \in (0,1)$
   \STATE Initialize: $C_s \gets C^-_s$, $c \gets \texttt{sum}(C^-_s)$
   \STATE $w \gets r + \gamma v$
   \STATE $idxs \gets \texttt{argsort}(w, \texttt{direction}=decreasing)$
   \FOR{$i=1$ {\bfseries to} $|\mathcal{S}|$}
   \STATE $idx \gets idxs[i]$
   \STATE $\Delta c \gets \min(C^+_s-C^-_s, 1-c)$
   \STATE $C_s[idx] \gets C_s[idx] + \Delta c$
   \STATE $c \gets c + \Delta c$
   \ENDFOR
   \STATE {\bfseries Return:} $C_s$
\end{algorithmic}
\end{algorithm}

Due to small numerical errors this version of policy iteration may not keep the same policy, $C$, between successive iterations when at the optimal solution. To ensure that the procedure stops in a reasonable time and closely approximates the true solution we employ three techniques: an iteration limit, halting computation when $C$ is $\epsilon$-close to between iterations, and stopping when a bound on the distance to true solution is below a tolerance, $tol$. 
The first two approaches are straight forward an in the pseudocode. 
To bound the distance to the true solution we leverage prior work on bounding the distance of the current value function to the optimal value function. 
Consider the value function $v$ and the subsequent value function $v'$ obtained after one application of the Bellman operator. 
In our problem the Bellman operator is
\begin{equation}
    v(s) = \max_{C_s \in \mathcal{C}_s} R(s) + \gamma \sum_{s' \in \mathcal{S}}C_{s,s'} v(s').
\end{equation}
Let $\epsilon = \max_{s \in \mathcal{S}} |v(s)-v'(s)|$. 
Then the distance $\epsilon_{v^*} = \max_{s \in \mathcal{S}}|v'(s)-v^*(s)|$ of $v'$ to the optimal value function $v^*$, is bounded above by $\epsilon$ \citep{williams1993bounds}, i.e, 
$$\epsilon_{v^*} \le (2 \epsilon \gamma) / (1-\gamma).$$
We translate this this bound into a bound on the error to the confidence interval of $y_i$. 
Let $y_i$ be the confidence interval computed using $v'$ and $y_i^*$ be the confidence interval computed using $v^*$. 
Then an upper bound on the error $\epsilon_{aggregate} = |y_i^*-y_i|$ is

\begin{align}
    \epsilon_{aggregate} = &\ |y_i^* - y_i| \\
    = &\ \left | \frac{1-\gamma}{|\mathcal{S}|} \sum_{s \in \mathcal{S}} |v^*(s)| - \frac{1-\gamma}{|\mathcal{S}|} \sum_{s \in \mathcal{S}}  |v'(s)| \right | \\
    = &\ \left | \frac{1-\gamma}{|\mathcal{S}|} (||v^*||_1 - ||v'||_1) \right |  \\
    \le &\ \frac{1-\gamma}{|\mathcal{S}|} ||v^*-v'||_1  \\
    = &\ \frac{1-\gamma}{|\mathcal{S}|} |\mathcal{S}| ||v^*-v'||_\infty  \\
    = &\ (1-\gamma) ||v^*-v'||_\infty  \\
    \le &\ (1-\gamma) \max_{s \in \mathcal{S}}|v^*(s)-v'(s)|  \\
    = &\ (1-\gamma) \epsilon_{v^*}  = 2 \epsilon \gamma.
\end{align}

\section{Algorithm Definitions}
\label{app:definitions}
This section provides the complete definition for each algorithm used in the experiments. 
Each algorithm is made complete by defining a distribution from, which hyperparameters are sampled. Table \ref{tab:hyperparametertable} shows distributions or values for any hyperparameter used in this work. For the continue state space environments, all of the algorithms use Fourier basis and linear function approximation \citep{konidaris2011fourier}. Note that $U(a,b)$ indicates a uniform random variable on $[a,b)$, and $U(\{\ldots \})$ indicates that a variable is sampled uniformly at random from a set of finite values.

Note that these ranges should not be considered optimal and could easily be improved for the environments in this experiment. The ranges were chosen to reflect a lack of knowledge about what optimal settings on an environment are and to be reflective of ranges one might expect optimal hyperparameters to fall in. The manual setting of these ranges leaks information based on our own experience with the algorithms and environments. However, since the definition is completely specified on these environments any favor to one algorithm could be easily identified and test for. Furthermore, any adaptive algorithm that can adjust these parameters during learning is likely to be superior than specifying better ranges through experience in this exact setup. Still, one should not tune ranges to fit any given set of environments. 

% Please add the following required packages to your document preamble:
% \usepackage[table,xcdraw]{xcolor}
% If you use beamer only pass "xcolor=table" option, i.e. \documentclass[xcolor=table]{beamer}
\begin{table}
\centering
\begin{tabular}{rrcc}
\hline
\multicolumn{1}{c}{\textbf{Algorithm}} & \multicolumn{1}{c}{\textbf{Hyperparameter}} & \textbf{Discrete}                & \textbf{Continuous}                                                 \\ \hline
\rowcolor[HTML]{EFEFEF} 
All                                    & $\lambda$                                   & \multicolumn{2}{c}{\cellcolor[HTML]{EFEFEF}$U(0,1)$}                                                   \\
All                                    & $\gamma$                                    & \multicolumn{2}{c}{$\Gamma - e^{U(\ln 10^{-4},\ln 0.05)}$}                                             \\
\rowcolor[HTML]{EFEFEF} 
All                                    & Value function                              & Tabular                          & Linear with Fourier basis                                           \\
All Sarsa$(\lambda)$ and Q$(\lambda)$  & $\epsilon$                                  & \multicolumn{2}{c}{$U(0,1)$}                                                                           \\
\rowcolor[HTML]{EFEFEF} 
Sarsa$(\lambda)$ and Q$(\lambda)$      & $\alpha_q$                                  & $e^{U(\ln 10^{-3},\ln 10^{-1})}$ & $e^{U(\ln 10^{-6},\ln 10^{-3})}$                                    \\
Sarsa$(\lambda)$-s and Q$(\lambda)$-s  & $\alpha_q$                                  & $e^{U(\ln 10^{-3},\ln 10^{-1})}$ & $e^{U(\ln 10^{-3},\ln 10^{0})/|\phi|}$                              \\
\rowcolor[HTML]{EFEFEF} 
All AC, PPO, and NACTD                 & Policy                                      & Tabular Softmax                  & Linear Softmax with Fourier basis                                   \\
AC, NACTD                              & $\alpha_v$                                  & $e^{U(\ln 10^{-3},\ln 10^{-1})}$ & $e^{U(\ln 10^{-6},\ln 10^{-3})}$                                    \\
\rowcolor[HTML]{EFEFEF} 
AC, NACTD                              & $\alpha_p$                                  & $e^{U(\ln 10^{-3},\ln 10^{-1})}$ & $e^{U(\ln 10^{-6},\ln 10^{-3})}$                                    \\
AC-S                                   & $\alpha_v$                                  & $e^{U(\ln 10^{-3},\ln 10^{-1})}$ & $e^{U(\ln 10^{-3},\ln 10^{0})/|\phi|}$                              \\
\rowcolor[HTML]{EFEFEF} 
AC-S, AC-Parl2                         & $\alpha_p$                                  & $e^{U(\ln 10^{-3},\ln 10^{-1})}$ & $e^{U(\ln 10^{-3},\ln 10^{0})/(|\phi| \times \text{num\_actions)}}$ \\
NAC-TD                                 & $\alpha_w$                                  & $e^{U(\ln 10^{-3},\ln 10^{-1})}$ & $e^{U(\ln 10^{-6},\ln 10^{-3})}$                                    \\
\rowcolor[HTML]{EFEFEF} 
NAC-TD                                 & normalize\_gradient                         & \multicolumn{2}{c}{\cellcolor[HTML]{EFEFEF}True}                                                       \\
PPO                                    & clip                                        & $U(0.1,0.3)$                     &                                                                     \\
\rowcolor[HTML]{EFEFEF} 
PPO                                    & entropy\_coef                               & \multicolumn{2}{c}{\cellcolor[HTML]{EFEFEF}$e^{U(\ln 10^{-8},\ln 10^{-2})}$}                           \\
PPO                                    & steps\_per\_batch                           & \multicolumn{2}{c}{$2^{U(\log_2 64, \log_2 256)}$}                                                     \\
\rowcolor[HTML]{EFEFEF} 
PPO                                    & epochs                                      & \multicolumn{2}{c}{\cellcolor[HTML]{EFEFEF}$U(\{1,\dotsc,10\})$}                                       \\
PPO                                    & batch\_size                                 & \multicolumn{2}{c}{$2^{U(\log_2 16, \log_2 \min(64, \text{steps\_per\_batch}))}$}                       \\
\rowcolor[HTML]{EFEFEF} 
PPO                                    & Adam-$\epsilon$                             & \multicolumn{2}{c}{\cellcolor[HTML]{EFEFEF}$10^{-5}$}                                                  \\
PPO                                    & Adam-$\alpha$                               & $e^{U(\ln 10^{-3},\ln 10^{-1})}$ & $e^{U(\ln 10^{-6},\ln 10^{-3})}$                                    \\
\rowcolor[HTML]{EFEFEF} 
PPO                                    & Adam-$\beta_1$                              & \multicolumn{2}{c}{\cellcolor[HTML]{EFEFEF}$0.9$}                                                      \\
PPO                                    & Adam-$\beta_2$                              & \multicolumn{2}{c}{$0.999$}                                                                            \\
\rowcolor[HTML]{EFEFEF} 
Fourier basis                          & dorder                                      & N/A                              & $U(\{0,\dotsc,9\})$                                                 \\
Fourier basis                          & iorder                                      & N/A                              & $U(\{1,\dotsc,9\})$                                                 \\
\rowcolor[HTML]{EFEFEF} 
Fourier basis                          & trig                                        & N/A                              & $\texttt{cos}$                                                     
\end{tabular}
\caption{This table show the distributions from which each hyperparameter is sampled. The All algorithm means the hyperparameter and distribution were used for all algorithms. Steps sizes are labeled with various $\alpha$s. The discount factor $\gamma$ an algorithm uses is scaled down from $\Gamma$ that is specified by the environment. For all environments used in this work $\Gamma = 1.0$. PPO uses the same learning rate for both the policy and value function. The max dependent order on the Fourier basis is limited such that no more than $10,\!000$ features are generated as a result of dorder. }
\label{tab:hyperparametertable}
\end{table}

\section{Environments}
\label{app:environments}
This section describes the environments used in the experiments. 
All environments are listed in Table \ref{tab:envtable}. 
Environments were recreated in the Julia language and many implementations follow closely to the that in the RLPy repository \citep{geramifard2015rlpy}. 
Most environments are best described by either the paper publishing the environment or by examining the source code we provide. We also describe the discrete environments below.

There are eight discrete domains used in the work four Gridworld environments and four chain environments. 
The Gridworld environments are an $N \times N$ grid with the agent starting every episode in the top left corner and goal state in the bottom right. The reward is $-1$ at every step until the goal state is reached, then the episode is then terminated. In every state there are four actions: up, down, left, and right. The transition dynamics are either deterministic, meaning an up action sends the agent up one state unless it is outside the map, or the dynamics are stochastic, meaning the agent might randomly move to one of the states perpendicular to the intended direction or stays in the current state. 
The chain environments are $N$ chains where there are $N$ states each with a connection only to the state directly to the left or right of it and end points only connect to the one state they are next to. The agent starts in state one (far left of the chain) and the goal state is state $N$ (far right of the chain). The reward is $-1$ every step until the goal state is reached and then the episode terminates. Both gridworld and chain MDPs terminate episodes in a finite time based on the size of the problem. Gridworld problems terminate after $20N^2$ steps have been taken and chain environments terminate after $20N$ steps are taken. 

\begin{table}
\centering
\begin{tabular}{lcc}
\hline
\multicolumn{1}{c}{\textbf{Environment}} & \textbf{Num Episodes} & \textbf{State Space} \\ \hline
Gridworld 5 Deterministic                & 100                   & Discrete             \\
Gridworld 5 Stochastic                   & 100                   & Discrete             \\
Gridworld 10 Deterministic               & 100                   & Discrete             \\
Gridworld 10 Stochastic                  & 100                   & Discrete             \\
Chain 10 Deterministic                   & 100                   & Discrete             \\
Chain 10 Stochastic                      & 100                   & Discrete             \\
Chain 50 Deterministic                   & 100                   & Discrete             \\
Chain 50 Stochastic                      & 100                   & Discrete             \\
Acrobot                                  & 500                   & Continuous           \\
Cart-Pole                                & 100                   & Continuous           \\
MountainCar                              & 100                   & Continuous           \\
PinBall Empty                            & 100                   & Continuous           \\
PinBall Box                              & 100                   & Continuous           \\
Pinball Medium                           & 100                   & Continuous           \\
PinBall Single                           & 200                   & Continuous          
\end{tabular}
\caption{This table list every used in this paper along with the number of episodes each algorithm was allowed to interact with the environment and its type of state space. }
\label{tab:envtable}
\end{table}

\section{Alternative Bounding Techniques}
\label{app:altbounds}
As pointed out in our description of PBP, we use the nonparametric concentration inequalities DKW and Anderson's inequalities. These inequalities are often conservative and lead to conservative confidence intervals. So we investigate two alternatives PBP-t a method that replaces DKW and Anderson's inequality with the one based on Students t-distribution and the percentile bootstrap.

In PBP-t everything is the same as PBP except we compute confidence intervals on the mean normalized performance as follows
\begin{equation}
\begin{aligned}
    Z^-_{i,j,k} = & \ \mu_{i,j,k} - \frac{\hat \sigma}{\sqrt{T_{i,j}}} t_{1-\delta',T_{i,j}-1}\\
    Z^+_{i,j,k} = & \ \mu_{i,j,k} + \frac{\hat \sigma}{\sqrt{T_{i,j}}} t_{1-\delta',T_{i,j}-1}\\
\end{aligned}
\end{equation}
where $z_{i,j,k,t} = \hat F_{X_{k,j}}(x_{i,j,t})$, $\mu_{i,j,k} = \frac{1}{T_{i,j}} \sum_{t=1}^{T_{i,j}} z_{i,j,k,t}$, $\hat \sigma = \sqrt{\sum_{t=1}^{T_{i,j}} (\mu_{i,j,k} - z_{i,j,k,t})^2 / (T_{i,j}-1)}$, $t_{1-\delta', \nu}$ is the $100(1-\delta')$ percentile of Student's t-distribution with $\nu$ degrees of freedom, and we set $\delta' = \delta / (|\mathcal{A}||\mathcal{M}|)$. Notice that there are $|\mathcal{A}|^2|\mathcal{M}|$ comparisons being made, so $\delta' = \delta / (|\mathcal{A}|^2|\mathcal{M}|)$ should be used to account for more the multiple comparisons. However, it is likely that if one comparison with an algorithm $i$ fails then that there will be failures with the other $|\mathcal{A}|-1$ algorithms so we use the smaller $\delta'$ as a heuristic for tighter confidence intervals.

In the bootstrap procedure, we use the percentile bootstrap with a confidence level of $\delta' = \delta/(|\mathcal{A}||\mathcal{M}|)$ and $10,\!000$ bootstrap samples of the aggregate performance. Each bootstrap is formed by re-sampling the performance of each algorithm on each environment for the collected data. Then the aggregate performance for each bootstrap is computed. The lower and upper confidence intervals are given by the $100(\delta'/2)$ and $100(1-\delta'/2)$ percentile from the bootstrap aggregate performance. 
Since $\delta'/2$ is often really small, $10,\!000$ bootstrap samples are needed to get confidence intervals that more accurately reflect the true confidence intervals. This requires a substantial amount of compute time and can take over four hours for an original sample size of $10,\!000$.

\section{Confidence interval test experiment}
\label{app:boundsexperiment}
To test the different bounding techniques, we estimate the failure rate of each confidence interval technique at different sample sizes. For this experiment we execute $1,\!000$ trials of the evaluation procedure using samples sizes (trials per algorithm per environment) of $10$, $30$, $100$, $1,\!000$, and $10,\!000$. 
There are a total of $11.14$ million samples per algorithm per environment. To reduce computation costs, we limit this experiment to only include the Sarsa-Parl2, Q-Parl2, AC-Parl2, and Sarsa($\lambda$)-s. Additionally, we reduce the environment set to be the discrete environments and mountain car. Then we compute the proportion of violations for any confidence interval. All methods use the same data, so the results are not independent. We choose not to run independent samples of the bounds to limit our environmental impact.
The method to compute of the proportion of violations and number of significant pairwise comparison can be found in the source code.

It it important to note that when this experiment was run, there was a bug in the code that made the step-size for the Parl2 methods smaller by a factor of $0.1$. This does not invalidate the results, only that the algorithms run are not equivalent those used in the other experiments in this paper. The main impact of this difference was that Sarsa-Parl2 and Q-Parl2 did were not as effective on the discrete MDPs (though they diverged even less) and their scores were nearly the same. Since both of these algorithms had near identical scores on most of the environments, it became almost impossible to differentiate them, so detecting five out of six ($83\%$) statistically significant comparisons is considered optimal for this experiment.

\section{Performances}
\label{app:indperf}

This section illustrates the distribution of performance of each algorithm. Tables of the average performance (rounded to the tenths place) along with the algorithm rank on that environment. In the Figures showing the performance distributions the shaded regions represent $100(1-0.05/|\mathcal{A}||\mathcal{M}|)\% \approx 99.9697\%$ confidence intervals computed via DKW.
In the performance tables $100(1-0.05/|\mathcal{A}||\mathcal{M}|)\%$ confidence intervals are shown in parentheses. 

\newpage

\begin{figure}[ht!]
    \centering
    \includegraphics[width=0.6\textwidth]{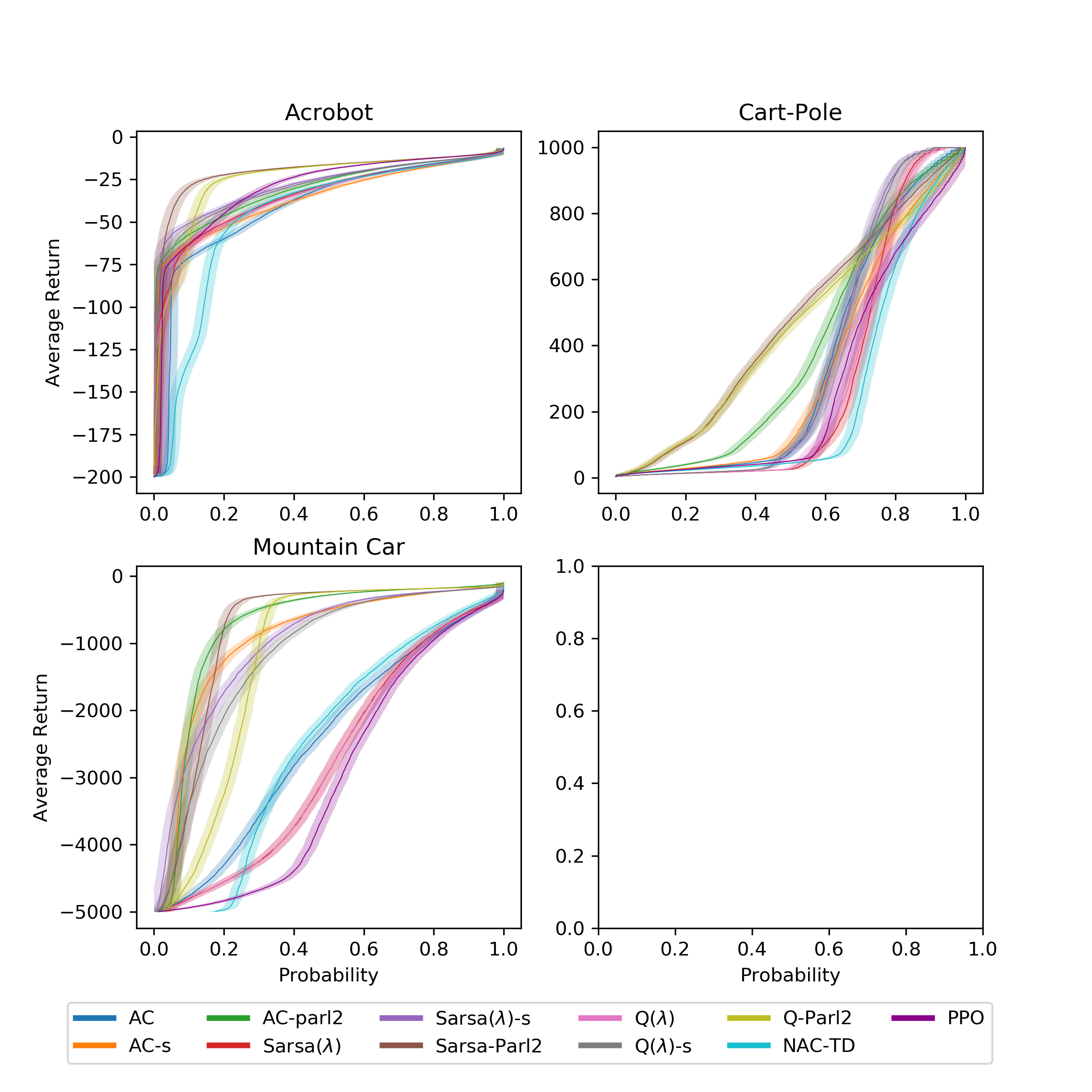}
    \label{fig:classic_dist}
\end{figure}

\begin{figure}[hb!]
    \centering
    \includegraphics[width=0.6\textwidth]{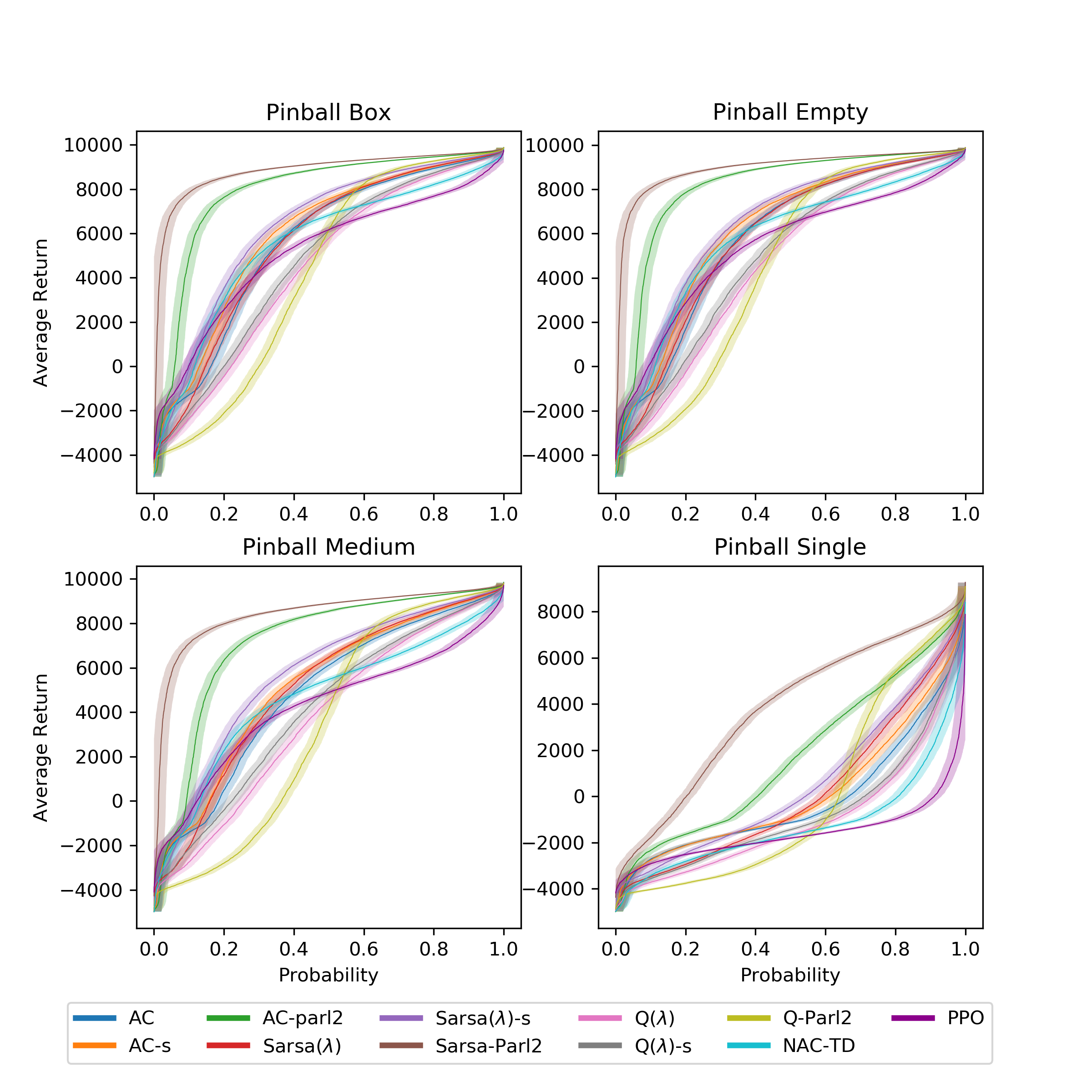}
    \label{fig:pinball_dist}
\end{figure}

\begin{figure}[ht!]
    \centering
    \includegraphics[width=0.6\textwidth]{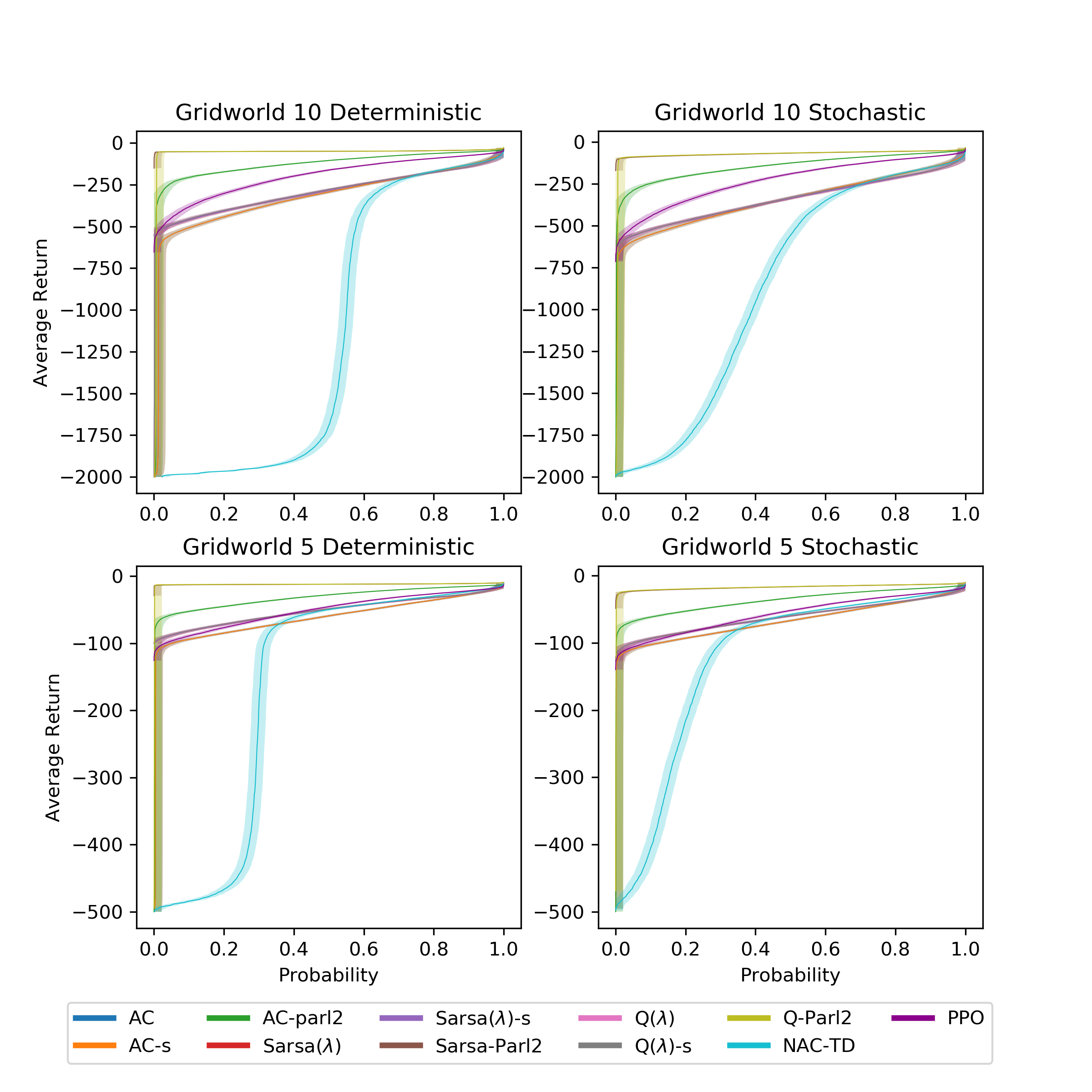}
    \label{fig:gridworld_dist}
\end{figure}

\begin{figure}[hb!]
    \centering
    \includegraphics[width=0.6\textwidth]{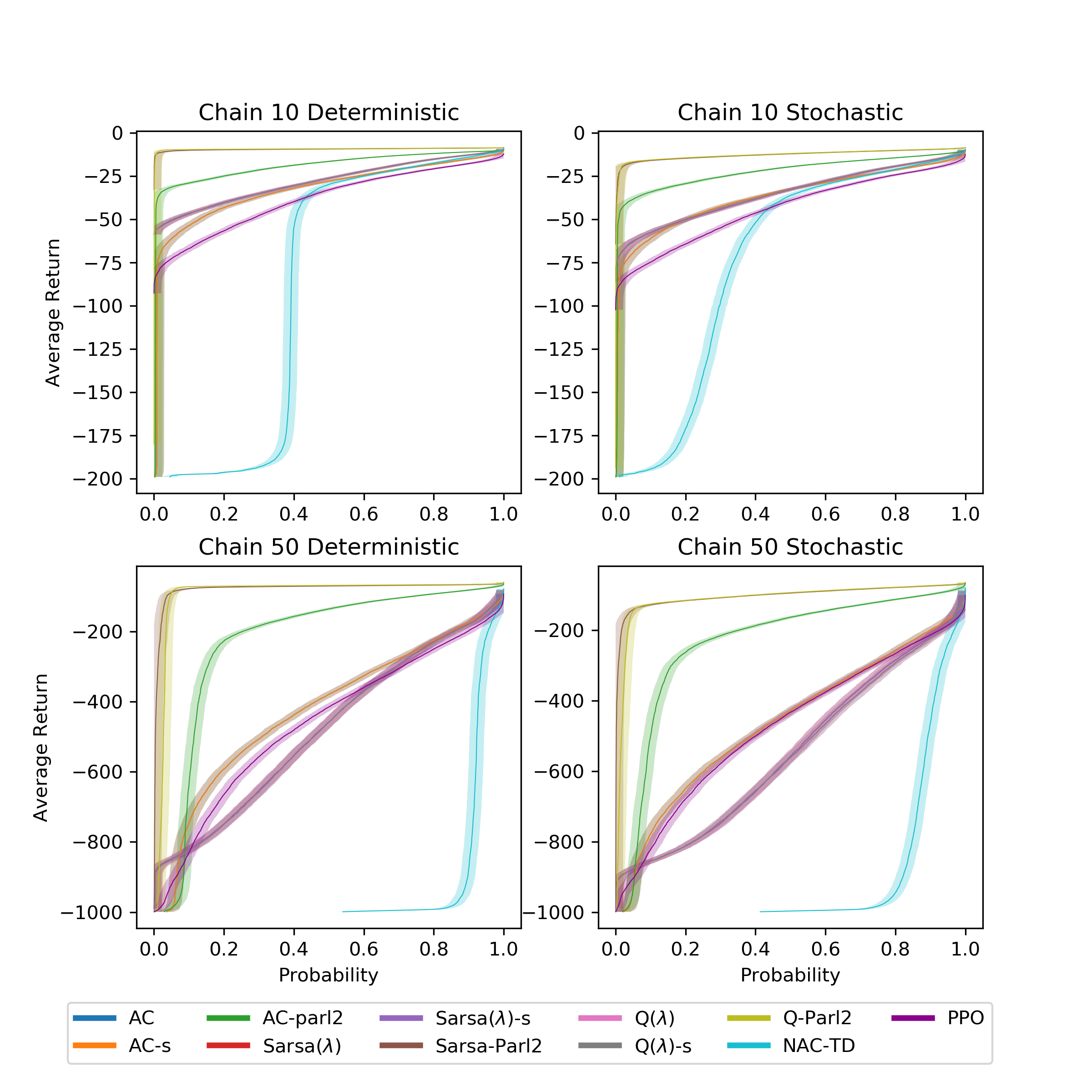}
    \label{fig:chain_dist}
\end{figure}

\newpage

\begin{table}[h!]
\centering 
\begin{tabular}{lcr}
\toprule
\multicolumn{3}{c}{\textbf{Acrobot}}                      \\ 
\midrule 
          Algorithm &                  Mean &         Rank \\
\midrule
\rowcolor[HTML]{EFEFEF} 
        Sarsa-Parl2 &  -20.6 (-24.6, -18.1) &     1 (2, 1) \\
            Q-Parl2 &  -25.7 (-29.8, -22.8) &     2 (6, 1) \\
\rowcolor[HTML]{EFEFEF} 
 Sarsa($\lambda$)-s &  -28.3 (-32.3, -26.3) &     3 (6, 2) \\
     Q($\lambda$)-s &  -30.3 (-34.3, -27.9) &     4 (9, 2) \\
\rowcolor[HTML]{EFEFEF} 
                PPO &  -30.5 (-34.5, -26.7) &     5 (9, 2) \\
           AC-parl2 &  -30.6 (-34.6, -28.8) &     6 (9, 2) \\
\rowcolor[HTML]{EFEFEF} 
   Sarsa($\lambda$) &  -34.7 (-38.7, -32.4) &    7 (10, 4) \\
       Q($\lambda$) &  -35.7 (-39.7, -33.3) &    8 (10, 4) \\
\rowcolor[HTML]{EFEFEF} 
               AC-s &  -36.0 (-40.0, -33.5) &    9 (10, 4) \\
                 AC &  -41.4 (-45.4, -37.3) &   10 (11, 7) \\
\rowcolor[HTML]{EFEFEF} 
             NAC-TD &  -47.1 (-51.1, -43.0) &  11 (11, 10) \\
\bottomrule
\end{tabular}
%\caption{}
\label{tab:perf_acrobot}
% \end{table}
%
%
%--------------------------------------------------
%
%
% \begin{table}
\centering
\begin{tabular}{lcr}
\toprule
\multicolumn{3}{c}{\textbf{Cart-Pole}}                      \\ 
\midrule 
          Algorithm &                  Mean &         Rank \\
\midrule
\rowcolor[HTML]{EFEFEF} 
        Sarsa-Parl2 &  469.2 (448.3, 490.0) &     1 (2, 1) \\
            Q-Parl2 &  450.3 (429.6, 471.1) &     2 (2, 1) \\
\rowcolor[HTML]{EFEFEF} 
           AC-parl2 &  390.4 (369.7, 411.2) &     3 (3, 3) \\
 Sarsa($\lambda$)-s &  347.5 (326.5, 368.4) &     4 (7, 4) \\
\rowcolor[HTML]{EFEFEF} 
     Q($\lambda$)-s &  345.5 (324.5, 366.3) &     5 (7, 4) \\
                 AC &  338.7 (317.8, 359.6) &     6 (7, 4) \\
\rowcolor[HTML]{EFEFEF} 
               AC-s &  320.6 (300.0, 341.5) &     7 (9, 4) \\
       Q($\lambda$) &  291.0 (270.1, 311.9) &    8 (10, 7) \\
\rowcolor[HTML]{EFEFEF} 
   Sarsa($\lambda$) &  287.2 (265.9, 308.6) &    9 (10, 7) \\
                PPO &  276.3 (256.1, 297.1) &   10 (11, 8) \\
\rowcolor[HTML]{EFEFEF} 
             NAC-TD &  244.7 (224.0, 265.5) &  11 (11, 10) \\
\bottomrule
\end{tabular}
%\caption{}
\label{tab:perf_cartpole}
\end{table}

%--------------------------------------------------

\begin{table}[h!] 
\centering 
\begin{tabular}{lcr}
\toprule
\multicolumn{3}{c}{\textbf{Mountain Car}}                      \\ 
\midrule 
          Algorithm &                        Mean &         Rank \\
\midrule
\rowcolor[HTML]{EFEFEF} 
           AC-parl2 &     -791.6 (-894.0, -688.7) &     1 (3, 1) \\
        Sarsa-Parl2 &     -843.2 (-945.0, -740.5) &     2 (4, 1) \\
\rowcolor[HTML]{EFEFEF} 
               AC-s &    -966.0 (-1068.0, -863.1) &     3 (4, 1) \\
 Sarsa($\lambda$)-s &   -1024.2 (-1125.8, -923.5) &     4 (5, 2) \\
\rowcolor[HTML]{EFEFEF} 
     Q($\lambda$)-s &  -1197.3 (-1298.9, -1094.9) &     5 (6, 4) \\
            Q-Parl2 &  -1269.3 (-1371.2, -1166.4) &     6 (6, 5) \\
\rowcolor[HTML]{EFEFEF} 
                 AC &  -2477.1 (-2576.0, -2374.3) &     7 (8, 7) \\
             NAC-TD &  -2489.3 (-2589.1, -2386.4) &     8 (8, 7) \\
\rowcolor[HTML]{EFEFEF} 
   Sarsa($\lambda$) &  -2769.1 (-2867.8, -2666.2) &    9 (10, 9) \\
       Q($\lambda$) &  -2779.3 (-2877.9, -2676.4) &   10 (10, 9) \\
\rowcolor[HTML]{EFEFEF} 
                PPO &  -3026.3 (-3124.9, -2923.5) &  11 (11, 11) \\
\bottomrule
\end{tabular}
%\caption{}
\label{tab:perf_mntcar}
\end{table}

%--------------------------------------------------

\begin{table}[h!] 
\centering 
\begin{tabular}{lcr}
\toprule
\multicolumn{3}{c}{\textbf{Chain 10 Deterministic}}                      \\ 
\midrule 
          Algorithm &                  Mean &         Rank \\
\midrule
\rowcolor[HTML]{EFEFEF} 
            Q-Parl2 &    -9.2 (-13.2, -9.1) &     1 (2, 1) \\
        Sarsa-Parl2 &    -9.3 (-13.3, -9.2) &     2 (2, 1) \\
\rowcolor[HTML]{EFEFEF} 
           AC-parl2 &  -19.0 (-23.0, -17.7) &     3 (3, 3) \\
   Sarsa($\lambda$) &  -27.9 (-31.8, -26.9) &     4 (9, 4) \\
\rowcolor[HTML]{EFEFEF} 
 Sarsa($\lambda$)-s &  -27.9 (-31.8, -26.9) &     4 (9, 4) \\
       Q($\lambda$) &  -27.9 (-31.9, -26.9) &     6 (9, 4) \\
\rowcolor[HTML]{EFEFEF} 
     Q($\lambda$)-s &  -27.9 (-31.9, -26.9) &     6 (9, 4) \\
                 AC &  -32.2 (-36.2, -29.9) &     8 (9, 4) \\
\rowcolor[HTML]{EFEFEF} 
               AC-s &  -32.2 (-36.2, -29.9) &     8 (9, 4) \\
                PPO &  -38.0 (-41.9, -36.5) &  10 (10, 10) \\
\rowcolor[HTML]{EFEFEF} 
             NAC-TD &  -89.7 (-93.7, -85.7) &  11 (11, 11) \\
\bottomrule
\end{tabular}
%\caption{}
% \label{tab:perf_chain10d}
% \end{table}
%
%
%--------------------------------------------------
%
%
% \begin{table} 
\centering 
\begin{tabular}{lcr}
\toprule
\multicolumn{3}{c}{\textbf{Chain 10 Stochastic}}                      \\ 
\midrule 
          Algorithm &                  Mean &         Rank \\
\midrule
\rowcolor[HTML]{EFEFEF} 
            Q-Parl2 &  -12.7 (-16.7, -12.2) &     1 (2, 1) \\
        Sarsa-Parl2 &  -12.8 (-16.8, -12.4) &     2 (2, 1) \\
\rowcolor[HTML]{EFEFEF} 
           AC-parl2 &  -22.6 (-26.6, -21.2) &     3 (3, 3) \\
       Q($\lambda$) &  -34.9 (-38.9, -33.6) &     4 (9, 4) \\
\rowcolor[HTML]{EFEFEF} 
     Q($\lambda$)-s &  -34.9 (-38.9, -33.6) &     4 (9, 4) \\
   Sarsa($\lambda$) &  -35.1 (-39.0, -33.7) &     6 (9, 4) \\
\rowcolor[HTML]{EFEFEF} 
 Sarsa($\lambda$)-s &  -35.1 (-39.0, -33.7) &     6 (9, 4) \\
                 AC &  -36.9 (-40.8, -34.7) &     8 (9, 4) \\
\rowcolor[HTML]{EFEFEF} 
               AC-s &  -36.9 (-40.8, -34.7) &     8 (9, 4) \\
                PPO &  -43.8 (-47.6, -42.1) &  10 (10, 10) \\
\rowcolor[HTML]{EFEFEF} 
             NAC-TD &  -75.2 (-79.1, -71.2) &  11 (11, 11) \\
\bottomrule
\end{tabular}
%\caption{}
\label{tab:perf_chain10s}
\end{table}

%--------------------------------------------------

\begin{table}[h!] 
\centering 
\begin{tabular}{lcr}
\toprule
\multicolumn{3}{c}{\textbf{Chain 50 Deterministic}}                      \\ 
\midrule 
          Algorithm &                     Mean &         Rank \\
\midrule
\rowcolor[HTML]{EFEFEF} 
        Sarsa-Parl2 &     -78.3 (-97.9, -71.5) &     1 (2, 1) \\
            Q-Parl2 &    -92.7 (-112.3, -74.6) &     2 (2, 1) \\
\rowcolor[HTML]{EFEFEF} 
           AC-parl2 &  -231.0 (-250.5, -211.3) &     3 (3, 3) \\
                 AC &  -425.5 (-444.1, -405.8) &     4 (6, 4) \\
\rowcolor[HTML]{EFEFEF} 
               AC-s &  -425.5 (-444.1, -405.8) &     4 (6, 4) \\
                PPO &  -462.2 (-480.4, -442.6) &    6 (10, 4) \\
\rowcolor[HTML]{EFEFEF} 
   Sarsa($\lambda$) &  -479.5 (-498.3, -462.4) &    7 (10, 6) \\
 Sarsa($\lambda$)-s &  -479.5 (-498.3, -462.4) &    7 (10, 6) \\
\rowcolor[HTML]{EFEFEF} 
       Q($\lambda$) &  -481.6 (-500.4, -464.0) &    9 (10, 6) \\
     Q($\lambda$)-s &  -481.6 (-500.4, -464.0) &    9 (10, 6) \\
\rowcolor[HTML]{EFEFEF} 
             NAC-TD &  -928.4 (-946.8, -908.7) &  11 (11, 11) \\
\bottomrule
\end{tabular}
%\caption{}
% \label{tab:perf_chain50d}
% \end{table}
%
%
%--------------------------------------------------
%
%
% \begin{table} 
\centering 
\begin{tabular}{lcr}
\toprule
\multicolumn{3}{c}{\textbf{Chain 50 Stochastic}}                      \\ 
\midrule 
          Algorithm &                     Mean &         Rank \\
\midrule
\rowcolor[HTML]{EFEFEF} 
        Sarsa-Parl2 &   -101.6 (-121.1, -96.4) &     1 (2, 1) \\
            Q-Parl2 &   -111.1 (-130.6, -97.5) &     2 (2, 1) \\
\rowcolor[HTML]{EFEFEF} 
           AC-parl2 &  -239.3 (-258.7, -219.7) &     3 (3, 3) \\
                 AC &  -463.2 (-481.5, -443.6) &     4 (6, 4) \\
\rowcolor[HTML]{EFEFEF} 
               AC-s &  -463.2 (-481.5, -443.6) &     4 (6, 4) \\
                PPO &  -474.4 (-492.2, -455.3) &     6 (6, 4) \\
\rowcolor[HTML]{EFEFEF} 
   Sarsa($\lambda$) &  -546.9 (-565.2, -529.2) &    7 (10, 7) \\
 Sarsa($\lambda$)-s &  -546.9 (-565.2, -529.2) &    7 (10, 7) \\
\rowcolor[HTML]{EFEFEF} 
       Q($\lambda$) &  -550.3 (-568.7, -532.5) &    9 (10, 7) \\
     Q($\lambda$)-s &  -550.3 (-568.7, -532.5) &    9 (10, 7) \\
\rowcolor[HTML]{EFEFEF} 
             NAC-TD &  -897.4 (-915.3, -877.8) &  11 (11, 11) \\
\bottomrule
\end{tabular}
%\caption{}
\label{tab:perf_chain50s}
\end{table}

%--------------------------------------------------

\begin{table}[h!] 
\centering 
\begin{tabular}{lcr}
\toprule
\multicolumn{3}{c}{\textbf{Gridworld 10 Deterministic}}                      \\ 
\midrule 
          Algorithm &                        Mean &         Rank \\
\midrule
\rowcolor[HTML]{EFEFEF} 
        Sarsa-Parl2 &        -49.5 (-90.7, -48.9) &     1 (2, 1) \\
            Q-Parl2 &       -60.8 (-102.0, -48.5) &     2 (2, 1) \\
\rowcolor[HTML]{EFEFEF} 
           AC-parl2 &     -130.6 (-171.6, -115.1) &     3 (3, 3) \\
                PPO &     -199.6 (-240.4, -188.8) &     4 (4, 4) \\
\rowcolor[HTML]{EFEFEF} 
       Q($\lambda$) &     -290.6 (-331.0, -278.9) &    5 (10, 5) \\
     Q($\lambda$)-s &     -290.6 (-331.0, -278.9) &    5 (10, 5) \\
\rowcolor[HTML]{EFEFEF} 
   Sarsa($\lambda$) &     -291.8 (-332.3, -281.3) &    7 (10, 5) \\
 Sarsa($\lambda$)-s &     -291.8 (-332.3, -281.3) &    7 (10, 5) \\
\rowcolor[HTML]{EFEFEF} 
                 AC &     -324.7 (-365.2, -295.3) &    9 (10, 5) \\
               AC-s &     -324.7 (-365.2, -295.3) &    9 (10, 5) \\
\rowcolor[HTML]{EFEFEF} 
             NAC-TD &  -1141.3 (-1181.6, -1099.9) &  11 (11, 11) \\
\bottomrule
\end{tabular}
%\caption{}
% \label{tab:perf_gw10d}
% \end{table}
%
%
%--------------------------------------------------
%
%
% \begin{table} 
\centering 
\begin{tabular}{lcr}
\toprule
\multicolumn{3}{c}{\textbf{Gridworld 10 Stochastic}}                      \\ 
\midrule 
          Algorithm &                     Mean &         Rank \\
\midrule
\rowcolor[HTML]{EFEFEF} 
        Sarsa-Parl2 &    -67.5 (-108.6, -66.1) &     1 (2, 1) \\
            Q-Parl2 &    -75.4 (-116.4, -64.8) &     2 (2, 1) \\
\rowcolor[HTML]{EFEFEF} 
           AC-parl2 &  -146.4 (-187.3, -135.8) &     3 (3, 3) \\
                PPO &  -229.5 (-270.1, -217.6) &     4 (4, 4) \\
\rowcolor[HTML]{EFEFEF} 
       Q($\lambda$) &  -339.2 (-379.2, -326.9) &    5 (10, 5) \\
     Q($\lambda$)-s &  -339.2 (-379.2, -326.9) &    5 (10, 5) \\
\rowcolor[HTML]{EFEFEF} 
   Sarsa($\lambda$) &  -342.3 (-382.4, -330.0) &    7 (10, 5) \\
 Sarsa($\lambda$)-s &  -342.3 (-382.4, -330.0) &    7 (10, 5) \\
\rowcolor[HTML]{EFEFEF} 
                 AC &  -347.2 (-387.4, -329.7) &    9 (10, 5) \\
               AC-s &  -347.2 (-387.4, -329.7) &    9 (10, 5) \\
\rowcolor[HTML]{EFEFEF} 
             NAC-TD &  -864.1 (-904.2, -823.3) &  11 (11, 11) \\
\bottomrule
\end{tabular}
%\caption{}
\label{tab:perf_gw10s}
\end{table}

%--------------------------------------------------

\begin{table}[h!] 
\centering 
\begin{tabular}{lcr}
\toprule
\multicolumn{3}{c}{\textbf{Gridworld 5 Deterministic}}                      \\ 
\midrule 
          Algorithm &                     Mean &         Rank \\
\midrule
\rowcolor[HTML]{EFEFEF} 
        Sarsa-Parl2 &     -12.2 (-22.5, -12.1) &     1 (2, 1) \\
            Q-Parl2 &     -12.5 (-22.7, -12.0) &     2 (2, 1) \\
\rowcolor[HTML]{EFEFEF} 
           AC-parl2 &     -32.2 (-42.4, -30.1) &     3 (3, 3) \\
                PPO &     -51.1 (-61.2, -49.0) &    4 (10, 4) \\
\rowcolor[HTML]{EFEFEF} 
       Q($\lambda$) &     -51.4 (-61.5, -49.6) &    5 (10, 4) \\
     Q($\lambda$)-s &     -51.4 (-61.5, -49.6) &    5 (10, 4) \\
\rowcolor[HTML]{EFEFEF} 
   Sarsa($\lambda$) &     -51.9 (-62.0, -50.1) &    7 (10, 4) \\
 Sarsa($\lambda$)-s &     -51.9 (-62.0, -50.1) &    7 (10, 4) \\
\rowcolor[HTML]{EFEFEF} 
                 AC &     -62.0 (-72.1, -58.2) &    9 (10, 4) \\
               AC-s &     -62.0 (-72.1, -58.2) &    9 (10, 4) \\
\rowcolor[HTML]{EFEFEF} 
             NAC-TD &  -168.1 (-178.3, -157.9) &  11 (11, 11) \\
\bottomrule
\end{tabular}
%\caption{}
% \label{tab:perf_gw5d}
% \end{table}
%
%
%--------------------------------------------------
%
%
% \begin{table} 
\centering 
\begin{tabular}{lcr}
\toprule
\multicolumn{3}{c}{\textbf{Gridworld 5 Stochastic}}                      \\ 
\midrule 
          Algorithm &                     Mean &         Rank \\
\midrule
\rowcolor[HTML]{EFEFEF} 
        Sarsa-Parl2 &     -17.0 (-27.2, -16.6) &     1 (2, 1) \\
            Q-Parl2 &     -17.1 (-27.4, -16.3) &     2 (2, 1) \\
\rowcolor[HTML]{EFEFEF} 
           AC-parl2 &     -37.2 (-47.3, -35.4) &     3 (3, 3) \\
                PPO &     -57.2 (-67.3, -54.9) &    4 (10, 4) \\
\rowcolor[HTML]{EFEFEF} 
       Q($\lambda$) &     -61.5 (-71.6, -59.4) &    5 (10, 4) \\
     Q($\lambda$)-s &     -61.5 (-71.6, -59.4) &    5 (10, 4) \\
\rowcolor[HTML]{EFEFEF} 
   Sarsa($\lambda$) &     -61.5 (-71.6, -59.4) &    7 (10, 4) \\
 Sarsa($\lambda$)-s &     -61.5 (-71.6, -59.4) &    7 (10, 4) \\
\rowcolor[HTML]{EFEFEF} 
                 AC &     -67.6 (-77.6, -64.8) &    9 (10, 4) \\
               AC-s &     -67.6 (-77.6, -64.8) &    9 (10, 4) \\
\rowcolor[HTML]{EFEFEF} 
             NAC-TD &  -125.4 (-135.5, -115.4) &  11 (11, 11) \\
\bottomrule
\end{tabular}
%\caption{}
\label{tab:perf_gw5s}
\end{table}

%--------------------------------------------------

\begin{table}[h!]
\centering 
\begin{tabular}{lcr}
\toprule
\multicolumn{3}{c}{\textbf{Pinball Box}}                      \\ 
\midrule 
          Algorithm &                     Mean &        Rank \\
\midrule
\rowcolor[HTML]{EFEFEF} 
        Sarsa-Parl2 &  8823.2 (8513.4, 8998.4) &    1 (1, 1) \\
           AC-parl2 &  7875.4 (7565.9, 8170.2) &    2 (2, 2) \\
\rowcolor[HTML]{EFEFEF} 
 Sarsa($\lambda$)-s &  6288.3 (5980.0, 6569.4) &    3 (4, 3) \\
               AC-s &  5961.2 (5653.1, 6251.1) &    4 (7, 3) \\
\rowcolor[HTML]{EFEFEF} 
   Sarsa($\lambda$) &  5603.0 (5295.1, 5889.7) &    5 (8, 4) \\
                 AC &  5602.5 (5295.1, 5886.8) &    6 (8, 4) \\
\rowcolor[HTML]{EFEFEF} 
             NAC-TD &  5546.8 (5243.2, 5838.0) &    7 (8, 4) \\
                PPO &  5184.6 (4882.6, 5447.4) &    8 (9, 5) \\
\rowcolor[HTML]{EFEFEF} 
     Q($\lambda$)-s &  4728.6 (4421.8, 5020.1) &   9 (11, 8) \\
       Q($\lambda$) &  4449.9 (4143.3, 4740.8) &  10 (11, 9) \\
\rowcolor[HTML]{EFEFEF} 
            Q-Parl2 &  4246.5 (3937.2, 4539.9) &  11 (11, 9) \\
\bottomrule
\end{tabular}
%\caption{}
% \label{tab:perf_pbbox}
% \end{table}
%
%
%--------------------------------------------------
%
%
% \begin{table}
\centering 
\begin{tabular}{lcr}
\toprule
\multicolumn{3}{c}{\textbf{Pinball Empty}}                      \\ 
\midrule 
          Algorithm &                     Mean &        Rank \\
\midrule
\rowcolor[HTML]{EFEFEF} 
        Sarsa-Parl2 &  8942.1 (8631.8, 9115.3) &    1 (1, 1) \\
           AC-parl2 &  8041.0 (7730.8, 8333.4) &    2 (2, 2) \\
\rowcolor[HTML]{EFEFEF} 
 Sarsa($\lambda$)-s &  6382.4 (6073.4, 6664.1) &    3 (5, 3) \\
               AC-s &  6155.8 (5846.5, 6444.9) &    4 (7, 3) \\
\rowcolor[HTML]{EFEFEF} 
                 AC &  5814.2 (5505.4, 6097.3) &    5 (8, 3) \\
   Sarsa($\lambda$) &  5778.6 (5469.8, 6063.7) &    6 (8, 4) \\
\rowcolor[HTML]{EFEFEF} 
             NAC-TD &  5710.1 (5405.2, 5998.0) &    7 (8, 4) \\
                PPO &  5401.3 (5097.3, 5666.6) &    8 (9, 5) \\
\rowcolor[HTML]{EFEFEF} 
     Q($\lambda$)-s &  4891.8 (4584.1, 5182.5) &   9 (11, 8) \\
       Q($\lambda$) &  4602.3 (4294.8, 4892.1) &  10 (11, 9) \\
\rowcolor[HTML]{EFEFEF} 
            Q-Parl2 &  4487.9 (4178.3, 4781.5) &  11 (11, 9) \\
\bottomrule
\end{tabular}
%\caption{}
\label{tab:perf_pbempt}
\end{table}

%--------------------------------------------------

\begin{table}[h!]
\centering 
\begin{tabular}{lcr}
\toprule
\multicolumn{3}{c}{\textbf{Pinball Medium}}                      \\ 
\midrule 
          Algorithm &                     Mean &         Rank \\
\midrule
\rowcolor[HTML]{EFEFEF} 
        Sarsa-Parl2 &  8402.1 (8094.2, 8625.5) &     1 (1, 1) \\
           AC-parl2 &  7128.1 (6820.6, 7429.2) &     2 (2, 2) \\
\rowcolor[HTML]{EFEFEF} 
 Sarsa($\lambda$)-s &  5667.4 (5361.4, 5949.4) &     3 (4, 3) \\
               AC-s &  5195.5 (4890.3, 5487.2) &     4 (7, 3) \\
\rowcolor[HTML]{EFEFEF} 
   Sarsa($\lambda$) &  5050.5 (4745.1, 5336.5) &     5 (7, 4) \\
                 AC &  4835.7 (4531.8, 5123.8) &     6 (7, 4) \\
\rowcolor[HTML]{EFEFEF} 
             NAC-TD &  4652.5 (4353.9, 4940.2) &     7 (9, 4) \\
                PPO &  4227.1 (3932.0, 4493.3) &    8 (10, 7) \\
\rowcolor[HTML]{EFEFEF} 
     Q($\lambda$)-s &  4084.7 (3780.6, 4375.1) &    9 (10, 7) \\
       Q($\lambda$) &  3689.6 (3386.2, 3981.9) &   10 (11, 8) \\
\rowcolor[HTML]{EFEFEF} 
            Q-Parl2 &  3404.5 (3097.9, 3699.2) &  11 (11, 10) \\
\bottomrule
\end{tabular}
%\caption{}
% \label{tab:perf_pbmed}
% \end{table}
%
%
%--------------------------------------------------
%
%
% \begin{table} 
\centering 
\begin{tabular}{lcr}
\toprule
\multicolumn{3}{c}{\textbf{Pinball Single}}                      \\ 
\midrule 
          Algorithm &                        Mean &         Rank \\
\midrule
\rowcolor[HTML]{EFEFEF} 
        Sarsa-Parl2 &     3754.6 (3470.3, 4023.3) &     1 (1, 1) \\
           AC-parl2 &     1696.7 (1418.4, 1989.3) &     2 (2, 2) \\
\rowcolor[HTML]{EFEFEF} 
 Sarsa($\lambda$)-s &        514.1 (240.8, 793.2) &     3 (5, 3) \\
   Sarsa($\lambda$) &       119.2 (-151.7, 408.1) &     4 (7, 3) \\
\rowcolor[HTML]{EFEFEF} 
               AC-s &       103.8 (-158.5, 389.6) &     5 (7, 3) \\
            Q-Parl2 &       -73.4 (-352.7, 214.7) &     6 (7, 4) \\
\rowcolor[HTML]{EFEFEF} 
                 AC &       -197.1 (-451.4, 95.1) &     7 (7, 4) \\
     Q($\lambda$)-s &    -746.6 (-1005.9, -457.2) &    8 (10, 8) \\
\rowcolor[HTML]{EFEFEF} 
       Q($\lambda$) &    -986.1 (-1244.4, -693.8) &    9 (10, 8) \\
             NAC-TD &   -1229.4 (-1463.7, -934.7) &   10 (11, 8) \\
\rowcolor[HTML]{EFEFEF} 
                PPO &  -1602.2 (-1789.1, -1327.7) &  11 (11, 10) \\
\bottomrule
\end{tabular}
%\caption{}
\label{tab:perf_pbsingle}
\end{table}

%--------------------------------------------------

}

\end{document}